\newif\ifarxiv
\newtheorem{theorem}{Theorem}
\newtheorem{definition}{Definition}
\newtheorem{proposition}{Proposition}
\newtheorem{lemma}{Lemma}
\title{\LARGE \bf
DRO-EDL-MPC: Evidential Deep Learning-Based Distributionally Robust Model Predictive Control for Safe Autonomous Driving
}
\author{Hyeongchan Ham$^{1}$ and Heejin Ahn$^{1*}$
\thanks{This research was supported by Basic Science Research Program through the National Research Foundation of Korea (NRF) funded by the Ministry of Education(RS-2025-25407010), IITP(Institute of Information \& Coummunications Technology Planning \& Evaluation)-ITRC(Information Technology Research Center) grant funded by the Korea government(Ministry of Science and ICT)(IITP-2026-RS-2023-00259991), and Hyundai Motor Chung Mong-Koo Foundation.}
\thanks{$^{1}$All authors are with the School of Electrical Engineering, Korea Advanced Institute of Science and Technology (KAIST), Yuseong-gu, 34141 Daejeon, Republic of Korea (e-mail: hyeongchan.ham@kaist.ac.kr;heejin.ahn@kaist.ac.kr).}
\thanks{Project Page: \url{https://dro-edl-mpc.github.io}}
}
\begin{document}

\maketitle
\thispagestyle{empty}
\pagestyle{empty}

\begin{abstract}

Safety is a critical concern in motion planning for autonomous vehicles. 
Modern autonomous vehicles rely on neural network-based perception, but making control decisions based on these inference results poses significant safety risks due to inherent uncertainties. To address this challenge, we present a distributionally robust optimization (DRO) framework that accounts for both aleatoric and epistemic perception uncertainties using evidential deep learning (EDL). Our approach introduces a novel ambiguity set formulation based on evidential distributions that dynamically adjusts the conservativeness according to perception confidence levels. We integrate this uncertainty-aware constraint into model predictive control (MPC), proposing the DRO-EDL-MPC algorithm with computational tractability for autonomous driving applications. Validation in the CARLA simulator demonstrates that our approach maintains efficiency under high perception confidence while enforcing conservative constraints under low confidence.

\end{abstract}

\section{INTRODUCTION}

Autonomous driving systems have attracted significant attention due to their potential impacts \cite{yurtsever_survey_2020}. These systems rely on three fundamental technologies: perception to interpret the environment, planning to determine optimal trajectories, and control to execute driving actions. One of the critical challenges in developing safe autonomous driving systems lies in the inherent uncertainties of the perception module, which propagate through the decision-making pipeline \cite{sun_toward_2024}. These uncertainties stem from noises in the data, known as \textit{data uncertainty} or \textit{aleatoric uncertainty}, and the lack of knowledge during the training procedure, known as \textit{model uncertainty} or \textit{epistemic uncertainty} \cite{kendall_what_2017}.

To address these uncertainties, researchers have proposed various approaches for safe and robust decision-making in autonomous driving systems. For handling data uncertainty specifically, risk metrics-based approaches have been developed, including chance-constrained optimization \cite{nemirovski_convex_2007, ren_chance-constrained_2023} and conditional value-at-risk (CVaR) methods \cite{rockafellar_optimization_2000, majumdar_how_2017}. While effective in principle, these approaches assume that perception models accurately represent data uncertainty distributions. In practice, perception models suffer from inherent model uncertainty arising from limited training data, architectural constraints, and optimization challenges, leading to incomplete knowledge representations \cite{kendall_what_2017}. 

\begin{figure}[t]
\centering
\includegraphics[width=\linewidth]{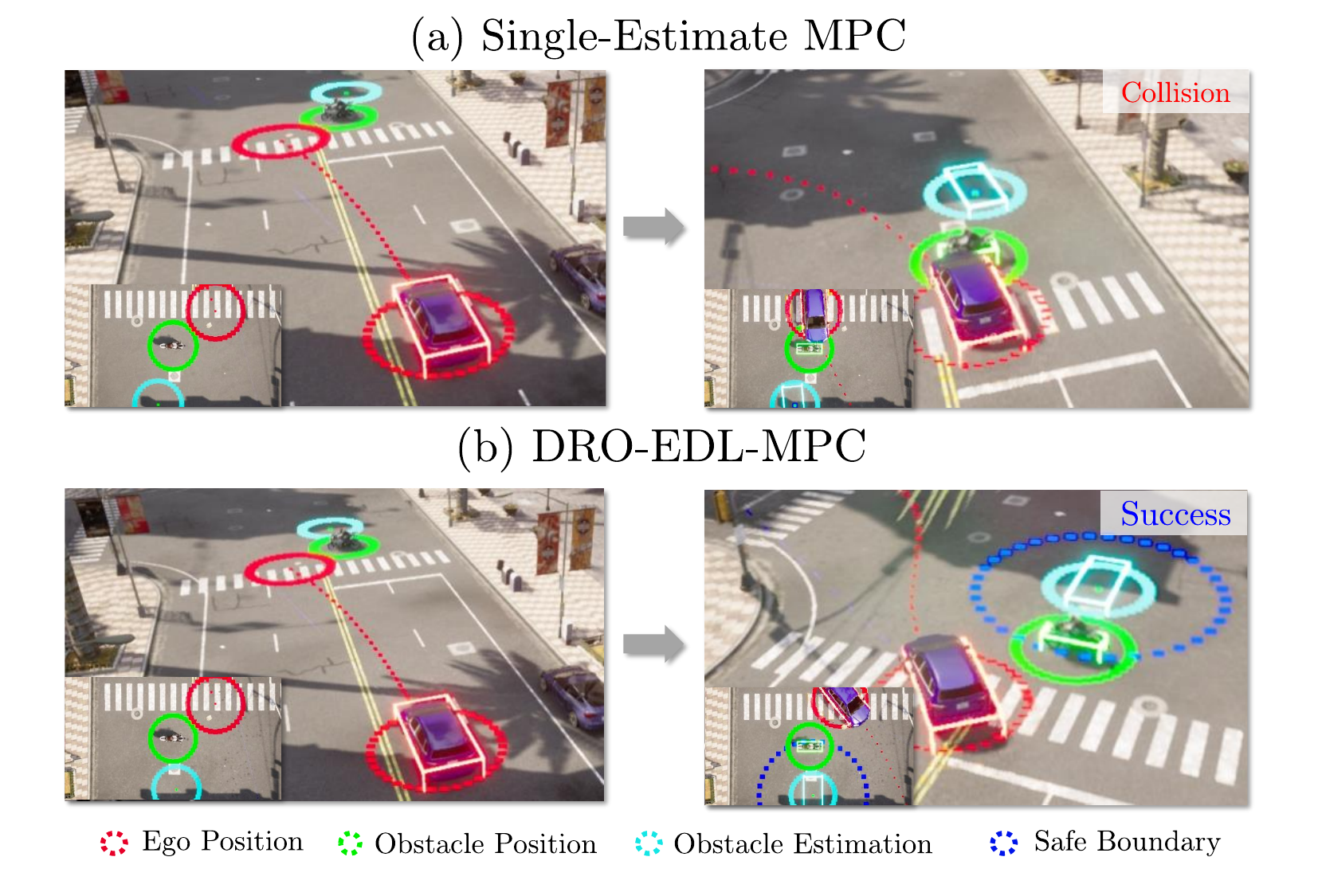}
\vspace*{-0.3in}
\caption{Example illustration of the Single-Estimate MPC and our proposed DRO-EDL-MPC methods in an uncertain perception scenario. (a) The Single-Estimate prediction (light blue circle) deviates significantly from the actual obstacle location (green circle), leading to collision. (b) In contrast, our DRO-EDL-MPC employs uncertainty-aware distributionally robust safety constraint (dark blue circle) and successfully avoids collision.}
\label{fig:MAIN}
\vspace*{-0.3in}
\end{figure}

Distributionally robust optimization (DRO) has emerged as a promising framework to account for such model uncertainty by optimizing against worst-case distributions within a predefined set of candidate distributions, called an ambiguity set \cite{wiesemann_distributionally_2014}. Ambiguity set formulations can be categorized into two main approaches: discrepancy-based \cite{ben-tal_robust_2013, dellaporta_distributionally_2024, hakobyan_distributionally_2023, safaoui_distributionally_2024, aolaritei_wasserstein_2023, long_sensor-based_2025} and moment-based \cite{delage_distributionally_2010, wang_signal-devices_2025}. Discrepancy-based approaches define an ambiguity set as a set of distributions within fixed radii in terms of discrepancy metrics, such as $\phi$-divergence \cite{ben-tal_robust_2013, dellaporta_distributionally_2024} or Wasserstein distance \cite{hakobyan_distributionally_2023, safaoui_distributionally_2024, aolaritei_wasserstein_2023, long_sensor-based_2025}.
Although \cite{esfahani_data-driven_2017} introduces a sample-dependent rule for adjusting this radius, such methods are widely reported to remain overly conservative in practice \cite{esfahani_data-driven_2017, safaoui_distributionally_2024, hakobyan_wasserstein_2022}. 
Also, moment-based approaches construct ambiguity sets by placing confidence bounds on sample moments and shrink these sets as the number of samples increases \cite{delage_distributionally_2010}.
However, such sample-dependent adaptability is undesirable in robotics, where real-time constraints impose an upper bound on sensor sampling frequency. In practice, the sample size is typically fixed at this limit, eliminating adaptability of the ambiguity set.

Recent work has shifted toward machine learning–based inference to achieve adaptive conservativeness without relying on empirical sample distributions.
For instance, \cite{wang_online-learning-based_2024} adapts the ambiguity set radius using a Dirichlet Process Mixture Model (DPMM); however, as a nonparametric model, DPMM requires a large number of observations to obtain accurate estimates and incurs high computational cost.
Another line of work \cite{wang_signal-devices_2025} employs evidential deep learning (EDL) \cite{amini_deep_2020}, which learns an evidential posterior (referred to as the \textit{evidential distribution}) rather than directly outputting point predictions.
This evidential posterior represents the parameters of the likelihood distribution of the target, and the method in \cite{wang_signal-devices_2025} leverages the variance of the likelihood variance. However, this higher-order variance becomes undefined for highly uncertain predictions and therefore restricts applicability to confident cases.

We propose a principled framework for constructing ambiguity sets that leverages this higher-order evidential distribution, which addresses the limitation of previous EDL-based approaches by directly integrating the evidential distribution. Our approach defines ambiguity sets by establishing boundaries where the cumulative probability of the evidential distribution reaches a predefined threshold. This construction yields ambiguity sets with clear probabilistic semantics: each set contains the true uncertainty distribution of the data with a predetermined confidence level. Unlike discrepancy-based methods that employ static distance metrics irrespective of model confidence, or moment-based approaches that only constrain particular statistical moments, our method dynamically adjusts the ambiguity set size according to the model uncertainty. 
Furthermore, in contrast to previous work \cite{wang_signal-devices_2025}, our approach is broadly applicable to uncertain scenarios.

We implement this EDL-based ambiguity set formulation within the distributionally robust safety constraint, called DR-EDL-CVaR, and this constraint is employed to our proposed DRO-EDL-MPC algorithm for autonomous driving. As shown in Fig. \ref{fig:MAIN}, this algorithm adapts conservativeness based on perception confidence while maintaining computational tractability.
We validate the algorithm in the CARLA simulator \cite{dosovitskiy_carla_2017}.

Our main contributions are as follows:
\begin{itemize}
\item We propose DR-EDL-CVaR, a distributionally robust safety constraint leveraging the cumulative probability of the evidential distribution to construct an uncertainty-aware ambiguity set. 
This provides an informative representation of the uncertainties.
\item We introduce DRO-EDL-MPC, a computationally tractable motion planning algorithm that incorporates DR-EDL-CVaR.
\item We validate DRO-EDL-MPC in the CARLA simulator, demonstrating its effectiveness in enhancing safety under various perception uncertainty conditions.
\end{itemize}

\section{Problem Formulation}
\begin{figure}[t]
\centering
\includegraphics[width=0.8\linewidth]{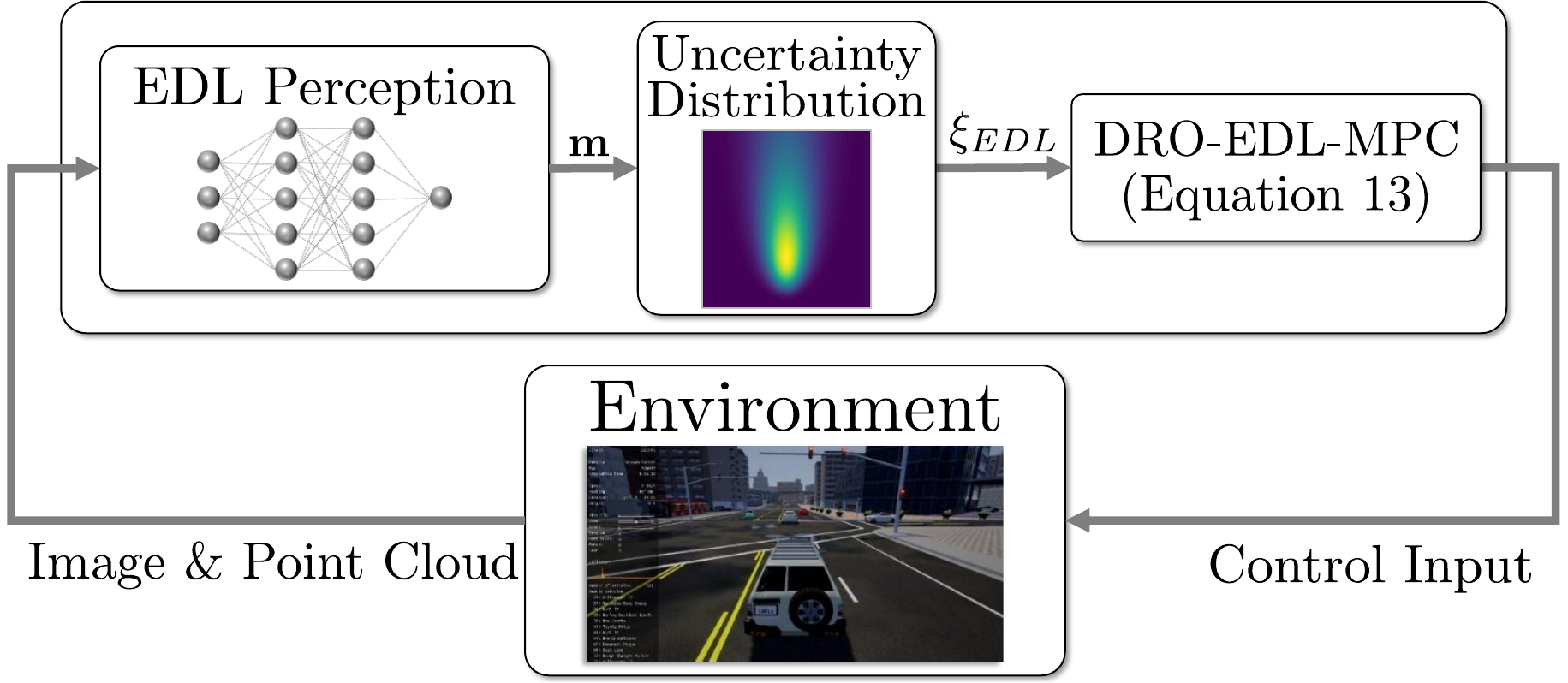}
\vspace*{-0.1in}
\caption{Illustration of the overall framework.}
\vspace*{-0.2in}
\label{fig:overall_framework}
\end{figure}

The goal of this paper is to compute the optimal motion of an ego vehicle navigating an environment with static obstacles while explicitly accounting for and adapting to perception uncertainties. The overall framework of our method is illustrated in Fig.~\ref{fig:overall_framework}.

The dynamics of the ego vehicle consists of the nominal dynamics model $f:\mathbb{R}^{n_x}\times \mathbb{R}^{n_u}\rightarrow \mathbb{R}^{n_x}$ and unknown dynamics $g: \mathbb{R}^{n_x}\times \mathbb{R}^{n_u}\rightarrow \mathbb{R}^{n_x}$, represented as 
\begin{equation}
    \begin{aligned}
        \mathbf{x}(t+1)=f(\mathbf{x}(t),\mathbf{u}(t)) + g(\mathbf{x}(t),\mathbf{u}(t)),
    \end{aligned}
    \label{eq:dynamics}
\end{equation}
where $\mathbf{x}(t)\in\mathbb{X}\subseteq \mathbb{R}^{n_x}$ and $\mathbf{u}(t)\in\mathbb{U}\subseteq\mathbb{R}^{n_u}$ are the state of the ego vehicle and the control input at timestep $t$. The nominal dynamics can be derived from physical principles of the system, while the unknown components can be learned from data using Gaussian Process Regression (GPR) \cite{hakobyan_distributionally_2023}.

To derive the unknown dynamics $g$, we collect input data $\mathbf{X}_t$ and target data $\mathbf{Y}_t$. Input data $\mathbf{X}_t=\{(\mathbf{x}(t-1), \mathbf{u}(t-1)), \ldots, (\mathbf{x}(t-M), \mathbf{u}(t-M))\}$ consists of the state and control input pairs of the past $M$ time steps. Target data $\mathbf{Y}_t=\{\Delta \mathbf{x}(t), ..., \Delta\mathbf{x}(t-M+1)\}$ consists of the residual $\Delta \mathbf{x}(t)$ between the nominal model $f(\mathbf{x}(t), \mathbf{u}(t))$ and the observed state $\mathbf{x}(t+1)$. Using GPR, we model the unknown dynamics by learning the mean $\mu^g(\mathbf{x}(t), \mathbf{u}(t))$ and covariance $\Sigma^g(\mathbf{x}(t), \mathbf{u}(t))$, characterizing the uncertainty as a Gaussian disturbance. 
Also, to propagate the uncertainty through nonlinear dynamics \eqref{eq:dynamics} over time, we use unscented transform (UT) \cite{ hakobyan_distributionally_2023}. More details can be found in \cite{hakobyan_distributionally_2023}.

The ego state $\mathbf{x}$ consists of the center positions $\mathbf{c}^x\in\mathbb{R}^{n_c}$, the heading angle $\phi^x\in \mathbb{R}$, and the speed $v^x \in\mathbb{R}$ of the ego vehicle. That is, $\mathbf{x}=(\mathbf{c}^x, \phi^x, v^x)$. For a 2-dimensional example, $n_c=2$ and $\mathbf{c}^x=(c_1^x, c_2^x)$ represents the coordinates of the center of the ego vehicle. Also, the ego has attributes for the lengths from the center to the boundary $\mathbf{b}^x\in\mathbb{R}^{n_c},$  where, for example, $\mathbf{b}^x=(b^x_1, b^x_2)$ represents the half of the width and length of the vehicle. The input of the ego vehicle consists of acceleration and steering angle.

The ego vehicle must avoid collisions with static obstacles in the environment.
Similar to the ego state $\mathbf{x}$, the obstacle state $\xi\in\mathbb{R}^{n_\xi}$ consists of the center positions $\mathbf{c}^\xi\in\mathbb{R}^{n_c}$, the heading angle $\phi^\xi\in \mathbb{R}$, and the speed $v^\xi\in \mathbb{R}$. That is, $\xi=(\mathbf{c}^\xi, \phi^\xi, v^\xi)$. The obstacle also has the attributes for the lengths from the center to the boundary $\mathbf{b}^\xi\in\mathbb{R}^{n_c}$. To prevent collisions between the ego vehicle and obstacles, we define a safety loss function $\ell: \mathbb{R}^{n_x}\times \mathbb{R}^{n_\xi}\rightarrow \mathbb{R}$ to quantify the level of safety. The safety constraint is encoded by 
\begin{equation}\label{eq:loss}
    \begin{aligned}
        &\ell(\mathbf{x}, \xi)=(r^x+r^\xi)^2-\|\mathbf{c}^x-\mathbf{c}^\xi\|^2_2\leq0,
    \end{aligned}
\end{equation}
 where $r^{\mathbf s}$ denotes the radius of any state $\mathbf{s}\in\{x, \mathbf{\xi}\}$ and is defined as $ r^{\mathbf s}:=\|\mathbf{b}^\mathbf{s}\|_2=\sqrt{\sum_{i}^{n_c}(b^\mathbf{s}_i)^2}$.

The obstacle state $\xi$ is estimated from sensor data $o$ using a perception model $F$ during the initialization stage. 
Since the perception model on robotic platforms typically employs a lightweight architecture and may encounter uncertain observations during deployment, it is essential to account for both data and model uncertainties.
To address data uncertainty, we employ Conditional Value-at-Risk (CVaR) constraint, $\text{CVaR}_\epsilon^{\mathbb{P}}[\ell(\mathbf{x}, \xi)]\leq 0$\footnote{
For loss $L\sim\mathbb{P}_L$, 
\ensuremath{
    \text{CVaR}_\epsilon^{\mathbb{P}_L}[L]:=\min_{z\in\mathbb{R}}\mathbb{E}^{\mathbb{P}_L}\left[z+\frac{(L-z)^+}{1-\epsilon}\right]} where $(L-z)^+=\max(L-z, 0)$. This is the average of the worst $(1-\epsilon)\%$ of outcomes.}, which enables the evaluation of expected loss in extreme scenarios. Here, $\mathbb{P}$ represents the probability distribution of $\xi$, and $\epsilon \in [0.5,1)$ is the confidence level parameter.
For model uncertainty, we use a distributionally robust approach. Given an ambiguity set $\mathbb{D}$ containing distribution $\mathbb{P}$, we formulate the distributionally robust CVaR constraint as $\max_{\mathbb{P}\in\mathbb{D}}\text{CVaR}_\epsilon^\mathbb{P}[\ell(\mathbf{x}, \mathbf{\xi})]\leq0.$ This constraint ensures safety by optimizing against the worst-case distribution within the ambiguity set.

The distributionally robust MPC problem with the stage-wise cost $c$ and the terminal cost $q$ is formulated as follows:
\begin{subequations}
\vspace*{-0.15in}
    \label{eq:mpc}
    \begin{align}
\min_{\mathbf{u}} \quad & \sum_{t=0}^{T-1}c(\mathbf{x}(t),\mathbf{u}(t))+q(\mathbf{x}(T))\\
\textrm{s.t.} \quad 
& \mathbf{x}(t)\in\mathbb{X}, \,\forall t\in \mathbb{Z}_{0:T},~~ \mathbf{u}(t)\in\mathbb{U},\,\forall t\in \mathbb{Z}_{0:T-1} \label{eq:MPC_state_action} \\
  & \eqref{eq:dynamics},~\forall t\in \mathbb{Z}_{0:T-1}  \label{eq:MPC_dynamics} \\
  &  \max_{\mathbb{P}\in\mathbb{D}}\text{CVaR}_\epsilon^\mathbb{P}[\ell(\mathbf{x}(t), \mathbf{\xi})]\leq0, ~\forall t\in \mathbb{Z}_{0:T}.  \label{eq:mpc_loss}
    \end{align}
\end{subequations}
Here, $\mathbb{Z}_{0:T}$ denotes the set of integers $\{0,1,\ldots, T\}$, and $T$ is the length of the planning horizon. The obstacle $\xi$ is assumed to be constant during the MPC prediction steps as it represents a static obstacle. For future research, we plan to extend this approach to handle dynamic obstacles by incorporating trajectory forecasting models \cite{marvi_evidential_2025}.
\section{EDL-based Safety Constraint}

In the distributionally robust formulation, one of the most critical components is how to construct the ambiguity set to balance robustness and performance. Existing DRO approaches construct these ambiguity sets using heuristically tuned parameters, such as Wasserstein distance \cite{hakobyan_distributionally_2023, safaoui_distributionally_2024} or moment constraints \cite{delage_distributionally_2010, wang_signal-devices_2025}. These approaches often fail to account for the true uncertainty inherent in the perception model, leading to overly conservative solutions even when the perception model expresses high confidence. To address this limitation, we employ an EDL perception model \cite{amini_deep_2020} to directly estimate the uncertainty distribution and utilize it to formulate more appropriate ambiguity sets. In our implementation, we consider the uncertainty distribution of $\mathbf{c}^\xi$ to formulate the ambiguity set, while using single predictions for the other terms $\mathbf{b}^\xi, \phi^\xi,$ and $v^\xi$ for simplicity.
\subsection{EDL-based Perception}
EDL assumes that the regression target $c_i^\xi$ is a random variable drawn from a Gaussian distribution $\mathcal{N}(\mu_i, \sigma_i^2)$ with the mean $\mu_i$ and variance $\sigma_i^2$.
Also, it assumes that the mean $\mu_i$ follows a Gaussian prior $\mathcal{N}(\gamma_i, \sigma_i^2/\lambda_i)$ with $\gamma_i\in\mathbb{R}, \lambda_i>0$ and the variance $\sigma_i^2$ follows the Inverse-Gamma prior with $\alpha_i>1, \beta_i>0$. Let us denote $\theta=(\mu_i,\sigma_i^2)$ and $m_i=(\gamma_i, \lambda_i, \alpha_i, \beta_i)$. Then the posterior $p(\theta|m_i)$ is defined as the Normal Inverse-Gamma (NIG) distribution $NIG(\theta|m_i)$, which is also called the evidential distribution \cite{amini_deep_2020}. The trained EDL model estimates the parameter $m_i$ of the evidential distribution, from which the prediction value of $c_i^\xi$, data uncertainty, and model uncertainty can be computed, respectively, as 
\begin{equation*}
    \mathbb{E}[\mu_i]=\gamma_i, ~~\mathbb{E}[\sigma_i^2]=\frac{\beta_i}{\alpha_i-1},~~ Var[\mu_i]=\frac{\beta_i}{\lambda_i(\alpha_i-1)}.
\end{equation*}
For an EDL model with the regression target $\mathbf{c}^\xi =(c_1^\xi,\ldots, c_{n_c}^\xi)$, it predicts the parameter $\mathbf{m}=(m_1, \ldots, m_{n_c})$. Because current EDL models, such as \cite{paat_medl-u_2024}, predict each $m_i$ separately at each head, we also assume each $c_i^\xi$ is independent and $\mathbf{c}^\xi$ is drawn from an $n_c$-dimensional multivariate Gaussian distribution $\mathcal{N}(\boldsymbol{\mu},\Sigma)$ where $\boldsymbol{\mu}=(\mu_1, \ldots, \mu_{n_c})$ and $\Sigma$ is a diagonal matrix of $(\sigma_1^2, \ldots, \sigma_{n_c}^2)$. 

\subsection{EDL-based Ambiguity Set and Safety Constraint}
The evidential distribution can be interpreted as a higher-order distribution that captures model uncertainty, and a lower-order realization of the evidential distribution, which is a normal distribution, represents data uncertainty. We leverage this evidential distribution to construct our ambiguity set and formulate distributionally robust safety constraints, enabling us to account for both model and data uncertainties.

\begin{definition}\label{definition:ambiguity}
    \textnormal{(EDL ambiguity set).} Given a cumulative probability threshold $\eta_i\in \mathbb{R}$ and evidential distribution parameter $m_i$, the ambiguity set for axis $i$ is defined as
    \begin{equation*}\label{eq:amb_set_i}
    \begin{aligned}
    \mathbb{D}_i(\eta_i|m_i):=\left\{\mathcal{N}(\mu, \sigma^2)| \int_{\theta=(\mu, \sigma^2)}NIG(\theta | m_i)d\theta= \eta_i\right\}.
    \end{aligned}
    \end{equation*}
    For all axis, given $\eta\in\mathbb{R}$ and $\mathbf{m}=(m_1,\ldots, m_{n_c})$, the ambiguity set $\mathbb{D}({\eta}|\mathbf{m})$ is defined as 
    \begin{equation*}\label{eq:amb_set}
    \begin{aligned}
        \mathbb{D}({\eta}|\mathbf{m}):=\{\mathcal{N}(\boldsymbol{\mu}, \Sigma)|
        &\mathcal{N}(\mu_i, \sigma_i^2)\in\mathbb{D}_i(\sqrt[n_c]{\eta}|m_i), \forall i\}.
    \end{aligned}
    \end{equation*}
\end{definition}
\noindent
This means that the ambiguity set for axis $i$ is a set of Gaussian distributions whose parameters lie within the $\eta_i$-confidence region of the evidential distribution.
Furthermore, under the assumption that the target follows a Gaussian distribution and EDL model is well-trained, 
the true target distribution lies within this ambiguity set with confidence $\eta_i$.
EDL generates a dispersed evidential distribution under high model uncertainty and a concentrated one when uncertainty is low, causing our ambiguity set to expand or contract accordingly. This ensures that even with a fixed confidence level $\eta_i$, our approach automatically adapts its conservativeness based on the perception model's uncertainty.

Next, we formulate the distributionally robust safety loss within this ambiguity set. 
\begin{definition}\label{definition:DR-EDL-CVaR}
    \textnormal{(DR-EDL-CVaR).} The distributionally robust safety loss given EDL ambiguity set $\mathbb{D}(\eta|\mathbf{m})$ is 
    \begin{equation}\label{eq:DR-EDL-CVaR}
    \begin{aligned}
    \text{DR-EDL-CVaR}_\epsilon^{\mathbb{D}(\eta|\mathbf{m})}[\ell(\mathbf{x},\xi)]
    :=\max_{{\mathbb{P}\in\mathbb{D}(\eta|\mathbf{m})}}\text{CVaR}_\epsilon^\mathbb{P}[\ell(\mathbf{x}, \xi)].
    \end{aligned}
    \end{equation}
\end{definition}
\noindent
The safety loss $\ell$ is defined in \eqref{eq:loss}.
This formulation enhances the robustness of our control approach by simultaneously addressing both types of uncertainty: data uncertainty through CVaR risk metric at confidence level $\epsilon$, and model uncertainty by optimizing against the worst-case CVaR within the ambiguity set at confidence level $\eta$. 

We assume that the target distributions follow independent Gaussian, which may not always hold in complex real-world environments. 
Multi-modal uncertainty can be addressed by representing each mode as an individual obstacle. Our framework supports multi-obstacle collision avoidance, and multi-modal EDL can be implemented following \cite{marvi_evidential_2025}.
To handle correlated Gaussian distributions, the ambiguity set formulation can be extended using the Normal Inverse-Wishart distribution, which is the output of a multivariate EDL model \cite{meinert_multivariate_2022}. In addition, model misspecification can be addressed by adopting a risk metric defined over a family of distributions, as discussed in \cite{yu_general_2009}, Proposition 2.

\section{Solution}

In this section, we present a tractable approach to obtain a solution that satisfies the safety constraint with distributionally robust safety loss~\eqref{eq:DR-EDL-CVaR}. 
This newly proposed safety constraint is integrated into the MPC framework.

\subsection{Identification of the worst-case obstacle distribution} \label{subsec:4.1}
Because it is nontrivial to find the closed-form solution of the ambiguity set, we define a \textit{surrogate ambiguity set}, which facilitates calculating the worst-case loss.

\begin{definition}\textnormal{(Surrogate ambiguity set).}  Let $\Theta:=\{\theta:NIG(\theta|m_i)\geq c_{th}$\} denote a superlevel set that satisfies $\int_{\Theta}NIG(\theta|m_i)d\theta=\eta_i$. Let $\mu_{i,min}, \mu_{i,max}$ and $\sigma^2_{i,min}, \sigma^2_{i,max}$ be the extreme values of $\mu$ and $\sigma^2$, respectively, in $\Theta$. Let $\mathcal{I}_{i,\mu}:=[\mu_{i,min},\mu_{i,max}]$ and $\mathcal{I}_{i,\sigma^2}:=[\sigma^2_{i,min},\sigma^2_{i,max}]$.
We define the surrogate ambiguity set as 
    \begin{multline}
    \mathbb{D}^{sur}_{i}(\eta_i|m_i):=\{\mathcal{N}(\mu, \sigma^2): \mu\in\mathcal{I}_{i,\mu}, \sigma^2\in\mathcal{I}_{i,\sigma^2}\}.
    \label{eq:sur_amb_set_i}
    \end{multline}
\label{def:sur_amb_set}
\end{definition}
This means that the surrogate ambiguity set is a set of Gaussian distributions whose $\mu$ and $\sigma^2$ lie in a rectangular region $\mathcal{I}_{i,\mu} \times \mathcal{I}_{i, \sigma^2}$.
The rectangular region $\mathcal{I}_{i,\mu} \times  \mathcal{I}_{i,\sigma^2}$ encloses the contour $\Theta$, resulting in for any $\eta_i$ and $m_i$
\begin{equation}\label{eq:D_surrogate_superset}
    \begin{aligned}
        \mathbb{D}_i(\eta_i|m_i)\subset\mathbb{D}_i^{sur}(\eta_i|m_i).
    \end{aligned}
\end{equation}
It also implies that the cumulative probability of the NIG distribution at $\mathcal{I}_{i,\mu}\times\mathcal{I}_{i,\sigma^2}$ is larger than $\eta_i$, which makes the ambiguity set more conservative.
In Fig. \ref{fig:ambiguity_set}, the contour $ \Theta$ that defines the original ambiguity set and $\mathcal{I}_{i,\mu}\times\mathcal{I}_{i,\sigma^2}$ that defines the surrogate ambiguity set are illustrated as a solid round and a dashed rectangle, respectively.

\begin{proposition}\label{prop:sur_amb_set_solution}
\textnormal{(Conservative surrogate ambiguity set solution).}
Given the surrogate ambiguity set $\mathbb{D}_i^{sur}(\eta_i|m_i)$, the distributionally robust safety loss satisfies
\begin{equation*}
        \begin{aligned}
            &\max_{{\mathbb{P}\in\mathbb{D}(\eta|\mathbf{m})}}\text{CVaR}_\epsilon^\mathbb{P}[\ell(\mathbf{x}(t), \xi)] \\
            &\leq(r^x+r^\xi)^2-\sum_i^{n_c}\max_{\mathbb{P}_i\in\mathbb{D}^{sur}_i(\eta_i|m_i)}\text{CVaR}_{\epsilon}^{\mathbb{P}_i}[-|c_i^x-c_i^\xi|]^2.
        \end{aligned}
    \end{equation*}
    Furthermore, 
    \begin{itemize}
        \item If $c_i^x\in\mathcal{I}_{i,\mu}$,
    \begin{equation*}
        \begin{aligned}
            \max_{\mathbb{P}_i\in\mathbb{D}_i^{sur}(\eta_i|m_i)}\text{CVaR}_{\epsilon}^{\mathbb{P}_i}[-|c_i^x-c_i^\xi|]=\kappa\cdot\sigma_{i,min}<0,
        \end{aligned}
    \end{equation*}
    where $\kappa=\frac{1}{1-\epsilon}
\sqrt{\frac{2}{\pi}}(\exp{(-[erf^{-1}(\epsilon-1)]^2)}-1)$ and $erf^{-1}(\cdot)$ the inverse of the error function.
    \item If $c_i^x\in\mathcal{I}_{i,unsafe}\setminus \mathcal{I}_{i,\mu}$,
    \begin{equation*}
\max_{\mathbb{P}_i\in\mathbb{D}^{sur}_i(\eta_i|m_i)}\text{CVaR}_{\epsilon}^{\mathbb{P}_i}[-|c_i^x-c_i^\xi|] \leq\kappa\cdot\sigma_{i,min}<0,
    \end{equation*}
    where $\mathcal{I}_{i,unsafe}=[\mu_{i,min}-\delta\cdot\sigma_{i,max}, \mu_{i,max}+\delta\cdot\sigma_{i,max}]$ for $\delta=\frac{1}{1-\epsilon}\frac{1}{\sqrt{2\pi}}\exp{(-[erf^{-1}(2\epsilon-1)]^2)}>0$.
    \item If $c_i^x\notin \mathcal{I}_{i,unsafe},$
    \begin{equation*}
        \begin{aligned}
&\max_{\mathbb{P}_i\in\mathbb{D}^{sur}_i(\eta_i|m_i)}\text{CVaR}_{\epsilon}^{\mathbb{P}_i}[-|c_i^x-c_i^\xi|] \\
            &<
            -|c_i^x-\gamma_i|+\frac{\mu_{i,max}-\mu_{i,min}}{2}+\delta\cdot\sigma_{i,max}<0,
        \end{aligned}
    \end{equation*}
    where $\gamma_i$ is given in the prediction result $m_i$.
    \end{itemize}
\end{proposition}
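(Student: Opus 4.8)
The plan is to peel the constant and the sum out of the CVaR, reduce everything to a one-dimensional worst-case CVaR of $-|c_i^x-c_i^\xi|$, and then solve that scalar problem over the rectangle $\mathcal I_{i,\mu}\times\mathcal I_{i,\sigma^2}$. Four standard ingredients drive the top inequality. Writing $\ell=(r^x+r^\xi)^2-\sum_i(c_i^x-c_i^\xi)^2$, translation invariance of CVaR pulls out the constant, leaving $(r^x+r^\xi)^2+\max_{\mathbb P}\text{CVaR}_\epsilon^{\mathbb P}[-\sum_i(c_i^x-c_i^\xi)^2]$. Subadditivity (coherence) bounds the CVaR of the sum by the sum of per-axis CVaRs; since $\mathbb D(\eta|\mathbf m)$ is a product of independent Gaussian marginals, the maximization decouples across axes (equality), and enlarging each $\mathbb D_i$ to the surrogate $\mathbb D_i^{sur}$ via \eqref{eq:D_surrogate_superset} only increases each term. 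This leaves a sum of $\max_{\mathbb P_i\in\mathbb D_i^{sur}}\text{CVaR}_\epsilon^{\mathbb P_i}[-(c_i^x-c_i^\xi)^2]$. The final step is the scalar inequality $\text{CVaR}_\epsilon[-Y^2]\le-(\text{CVaR}_\epsilon[-Y])^2$ for $Y=|c_i^x-c_i^\xi|\ge0$: because $Y^2$ is monotone in $Y\ge0$, both CVaRs condition on the same lower tail $\{Y\le q\}$, so this is exactly conditional Jensen $\mathbb E[Y^2\mid Y\le q]\ge(\mathbb E[Y\mid Y\le q])^2$. Since every $\text{CVaR}_\epsilon^{\mathbb P_i}[-Y]$ is negative, taking the max over $\mathbb P_i$ converts $\max(-(\cdot)^2)$ into $-(\max(\cdot))^2$, yielding the stated right-hand side with the square outside the maximized CVaR.

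Next I would normalize the scalar problem. With $c_i^\xi\sim\mathcal N(\mu,\sigma^2)$ and $a:=c_i^x$, positive homogeneity of CVaR gives $\text{CVaR}_\epsilon[-|a-c_i^\xi|]=\sigma\,\psi(m)$, where $m=(a-\mu)/\sigma$ and $\psi(m):=\text{CVaR}_\epsilon[-|N-m|]$ for standard normal $N$. A direct half-normal integral gives $\psi(0)=\kappa$, matching the stated $\kappa$ through the quantile $q=\sqrt2\,erf^{-1}(1-\epsilon)$. The key lemma is that $\psi$ is maximized at $m=0$: for $t>0$, $P(|N-m|\le t)=\Phi(m+t)-\Phi(m-t)$ is decreasing in $|m|$, so $|N-m|$ stochastically dominates $|N|$, and since CVaR is consistent with first-order stochastic dominance, $\psi(m)\le\psi(0)=\kappa<0$. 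Combined with $\sigma\ge\sigma_{i,min}$ and $\kappa<0$, this gives the universal bound $\sigma\,\psi(m)\le\sigma\kappa\le\kappa\sigma_{i,min}$ over the entire rectangle, which is precisely the Case 2 claim. For Case 1 ($a\in\mathcal I_{i,\mu}$), the choice $\mu=a,\sigma=\sigma_{i,min}$ forces $m=0$ and attains $\kappa\sigma_{i,min}$, so the universal bound is met with equality.

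Finally, Case 3 needs a distance-decaying bound rather than the constant $\kappa\sigma_{i,min}$. I would use $-|a-c_i^\xi|=\min(a-c_i^\xi,\,c_i^\xi-a)$ and monotonicity of CVaR to get $\text{CVaR}_\epsilon[-|a-c_i^\xi|]\le\min(\text{CVaR}_\epsilon[a-c_i^\xi],\text{CVaR}_\epsilon[c_i^\xi-a])=-|a-\mu|+\sigma\delta$, where $\delta=\text{CVaR}_\epsilon[N]=\exp(-[erf^{-1}(2\epsilon-1)]^2)/((1-\epsilon)\sqrt{2\pi})$ is the Gaussian CVaR. Maximizing the right side over the rectangle sets $\sigma=\sigma_{i,max}$ and $\mu$ at the endpoint nearest $a$; using that the NIG marginal of $\mu$ is symmetric about $\gamma_i$, so $\gamma_i=(\mu_{i,min}+\mu_{i,max})/2$ and $\mu_{i,max}-\gamma_i=(\mu_{i,max}-\mu_{i,min})/2$, the bound rewrites exactly as $-|a-\gamma_i|+\frac{\mu_{i,max}-\mu_{i,min}}2+\delta\sigma_{i,max}$. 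The interval $\mathcal I_{i,unsafe}$ is defined precisely as the set where this expression is nonnegative, so $a\notin\mathcal I_{i,unsafe}$ forces it strictly below zero.

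The main obstacle I expect is the worst-case lemma $\psi(m)\le\kappa$: establishing rigorously that the centered distribution $\mu=a$ (rather than a large-variance off-center one) is the true maximizer over the coupled pair $(\mu,\sigma)$, via the stochastic-dominance ordering of folded normals and the monotonicity of CVaR under it. Everything else reduces to translation invariance, subadditivity, positive homogeneity, the conditional Jensen step, and the two elementary Gaussian tail integrals defining $\kappa$ and $\delta$.
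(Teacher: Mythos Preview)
Your proposal is correct and follows the same top-level decomposition as the paper (translation invariance, per-axis split via the product structure of $\mathbb D$, the surrogate superset \eqref{eq:D_surrogate_superset}, and the conditional Jensen step $\text{CVaR}[-Y^2]\le-(\text{CVaR}[-Y])^2$), but your treatment of the scalar worst-case diverges from the paper in two places. First, for the monotonicity of $\text{CVaR}_\epsilon[-|X_d|]$ in $|\mu_d|$ (your $\psi(m)\le\psi(0)$), the paper computes $\frac{d}{d\mu}\text{CVaR}_\epsilon^{\mathbb P}[-|X|]$ explicitly via the derivative of the folded-normal VaR and shows it has the sign of $-\mu$; your stochastic-dominance argument ($P(|N-m|\le t)$ decreasing in $|m|$, hence $-|N-m|\preceq_{\text{FOSD}}-|N|$, hence CVaR ordered) reaches the same conclusion more conceptually and with less calculation. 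Second, for Case~3 the paper invokes its Lemma~1(d), which requires the VaR sign condition $\text{VaR}_\epsilon[X_d]$ and $\mu_d$ to share a sign, and then uses the very definition of $\mathcal I_{i,unsafe}$ (via $\delta$) to \emph{guarantee} that sign condition; your route $-|X|\le\min(X,-X)\Rightarrow\text{CVaR}_\epsilon[-|X|]\le-|\mu_d|+\delta\sigma$ is unconditional and shorter, and you only use $\mathcal I_{i,unsafe}$ at the end to make the bound negative. The trade-off is that the paper's route delivers the strict inequality in Case~3 directly from Lemma~1(d), whereas your monotonicity bound is a priori only $\le$; you would need one extra line (e.g.\ that $-|X|<X$ on a set of positive probability in the relevant tail of a nondegenerate Gaussian) to upgrade to the stated strict $<$.
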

\begin{figure}[t]
\centering
\includegraphics[width=0.8\linewidth]{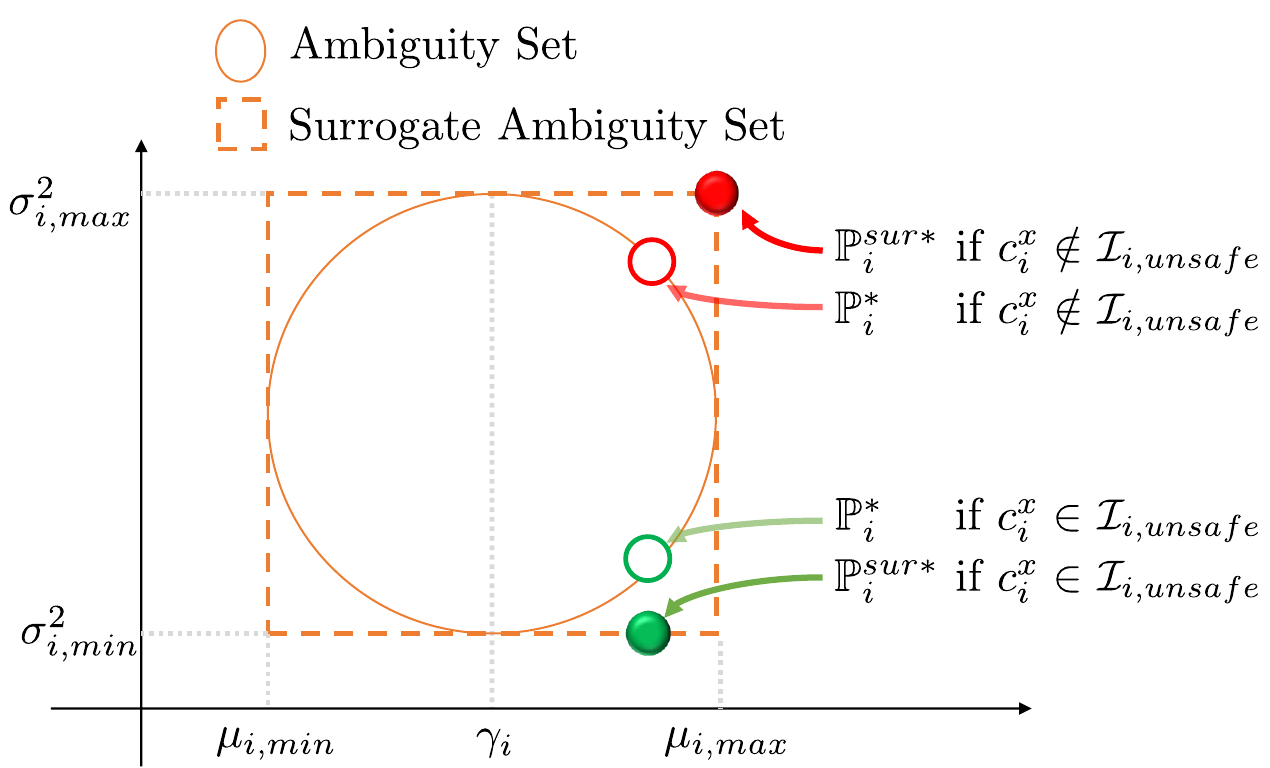}
\vspace*{-0.1in}
\caption{Illustration of moment sets $(\mu_i, \sigma_i^2)$ for the original and surrogate ambiguity sets along axis $i$. The worst-case distributions within the original and surrogate ambiguity sets are identified as $\mathbb{P}_i^*$ and $\mathbb{P}_i^{sur*}$, respectively.}
\label{fig:ambiguity_set}
\vspace*{-0.2in}
\end{figure}

\begin{proof} The worst-case CVaR of the safety loss  \eqref{eq:loss} is
\begin{align}
    &\max_{{\mathbb{P}\in\mathbb{D}(\eta|\mathbf{m})}}\text{CVaR}_\epsilon^\mathbb{P}[\ell(\mathbf{x}(t), \xi)] \notag \\
        &=(r^x+r^\xi)^2+\max_{\mathbb{P}\in\mathbb{D}(\eta|\mathbf{m})}\text{CVaR}_\epsilon^\mathbb{P}[-\sum_i^{n_c}|c_i^x-c_i^\xi|^2] \notag \\
        &\leq(r^x+r^\xi)^2-\sum_i^{n_c}\max_{\mathbb{P}_i\in\mathbb{D}^{sur}_i(\eta_i|m_i)}\text{CVaR}_{\epsilon}^{\mathbb{P}_i}[-|c_i^x-c_i^\xi|]^2.\label{eq:14}
    \end{align}
The equality holds as the axes of the obstacle distribution are independent of each other and by Lemma \ref{lemma:CVaR}(a) and \eqref{eq:D_surrogate_superset}.

We will derive the upper bound of $\max_{\mathbb{P}_i\in\mathbb{D}^{sur}_i(\eta_i|m_i)}\text{CVaR}_{\epsilon}^{\mathbb{P}_i}[-|c_i^x-c_i^\xi|]$ that is strictly negative and thus minimizes the squared term in \eqref{eq:14}.
The difference $X_d=c_i^x-c_i^\xi$ follows the normal distribution $\mathcal{N}(c_i^x-\mu_i,\sigma_i^2)$ when $c_i^\xi$ follows $\mathbb{P}_i=\mathcal{N}(\mu_i, \sigma_i^2)$. We denote $\mu_d:=c_i^x-\mu_i$ and $\sigma_d^2:=\sigma_i^2$. The distance $|X_d|$ follows a folded normal distribution. By Lemma \ref{lemma:CVaR}(b), 
$\text{CVaR}_{\epsilon}^{\mathbb{P}_i}[-|X_d|]$, referred to as the \textit{negative distance CVaR}, is a monotonically decreasing function with respect to $|\mu_d|$, and thus, its maximum is attained when $|\mu_d|$ is minimized.

If $c_i^x\in \mathcal{I}_{i,\mu}$, $|\mu_d|=0$ maximizes the negative distance CVaR. The distribution of $-|X_d|$ is a half-normal distribution, and the negative distance CVaR is $\kappa\cdot\sigma_i$ by Lemma~\ref{lemma:CVaR}(c). Therefore, the distribution $\mathbb{P}_i=\mathcal{N}(c_i^x,\sigma_{i,min}^2)$, which lies in the surrogate ambiguity set $\mathbb{D}_i^{sur}(\eta_i|m_i)$, yields the maximum negative distance CVaR of $\kappa\cdot\sigma_{i,min}$.

If $c_i^x\in\mathcal{I}_{i,unsafe}\setminus \mathcal{I}_{i,\mu}$, the mean $|\mu_d|$ of $|X_d|$ is larger than 0 because the mean $\mu_i$ of $c_i^\xi$ is confined in $\mathcal{I}_{i,\mu}$. 
Because the maximum negative distance CVaR decreases with respect to $|\mu_d|$ by Lemma~\ref{lemma:CVaR}(b), it is no larger than $\kappa\cdot \sigma_{i,min}$. 

If $c_i^x\notin\mathcal{I}_{i,unsafe}$, when $\mu_d > 0$, the $\epsilon$-quantile of the distribution of $X_d$, i.e., $\text{VaR}_{\epsilon}^{\mathbb{P}_i}[X_d]$, is a positive value by construction of $\mathcal{I}_{i,unsafe}$, in particular the definition of $\delta$. Similarly, when $\mu_d<0$, $\text{VaR}_{\epsilon}^{\mathbb{P}_i}[X_d]< 0$ by construction of $\mathcal{I}_{i,unsafe}$. Under this condition, Lemma~\ref{lemma:CVaR}(d) tells that $\max_{\mathbb{P}_i} -|\mu_d|+\delta \cdot \sigma_d > \max_{\mathbb{P}_i} \text{CVaR}_{\epsilon}^{\mathbb{P}_i}[-|X_d|]$. The maximum of $-|\mu_d|+\delta\cdot\sigma_d$ is attained at $\sigma_d=\sigma_{i,max}$ because $\delta>0$ and $|\mu_d|=|c_i^x-\mu_{i,min}|$ when $c_i^x < \mu_{i,min}-\delta\cdot\sigma_{i,max}$ or at $\sigma_d=\sigma_{i,max}$ and $|\mu_d|=|c_i^x-\mu_{i,max}|$ when $c_i^x>\mu_{i,\max}+\delta \cdot\sigma_{i,max}$. Using $\gamma_i$, which is the center of $[\mu_{i,min}, \mu_{i,max}]$, we can represent such optimal $|\mu_d|$ as $|c_i^x-\gamma_i|-(\mu_{i,max}-\mu_{i,min})/2$. Therefore, $-|c_i^x-\gamma_i|+(\mu_{i,max}-\mu_{i,min})/{2}+\delta\cdot\sigma_{i,max}> \max_{\mathbb{P}_i} \text{CVaR}_{\epsilon}^{\mathbb{P}_i}[-|X_d|]$ and this upper bound is negative because $c_i^x\notin \mathcal{I}_{i,unsafe}$ and by construction of $\mathcal{I}_{i,unsafe}$.
\end{proof}

\begin{lemma} \label{lemma:CVaR}
\textnormal{(Properties of the Conditional Value-at-Risk).}
Given a random variable $X\sim\mathbb{P}=\mathcal{N}(\mu, \sigma^2)$ and a threshold $\epsilon\in [0,1)$, the following holds. 
\begin{enumerate}[label=(\alph*)]
    \item $\text{CVaR}_{\epsilon}^\mathbb{P}[-|X|^2]\leq -\text{CVaR}_{\epsilon}^\mathbb{P}[-|X|]^2$
    \item For a fixed $\sigma^2$, $\text{CVaR}_{\epsilon}^\mathbb{P}[-|X|]$ is monotonically decreasing with respect to $|\mu|$.
    \item If $\mu=0$, then $\text{CVaR}_\epsilon^\mathbb{P}[-|X|]=\kappa\cdot\sigma$.
    \item If $\text{VaR}_\epsilon^\mathbb{P}[X]> 0 \land \mu> 0$, or $\text{VaR}_\epsilon^\mathbb{P}[X]< 0 \land \mu< 0$, then $-|\mu|+\delta\cdot\sigma> \text{CVaR}_\epsilon^\mathbb{P}[-|X|]$.
\end{enumerate}
\end{lemma}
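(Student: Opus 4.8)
The plan is to reduce every item to two standard facts about the coherent risk measure $\text{CVaR}$: for a continuous loss $L$ it admits the tail representation $\text{CVaR}_\epsilon^\mathbb{P}[L]=\mathbb{E}^\mathbb{P}[L\mid L\ge \text{VaR}_\epsilon^\mathbb{P}[L]]=\frac{1}{1-\epsilon}\int_\epsilon^1 \text{VaR}_u^\mathbb{P}[L]\,du$, and it is monotone (indeed it respects first-order stochastic dominance). The unifying observation for (a)--(c) is that $-|X|$ and $-|X|^2$ are both strictly decreasing functions of $W:=|X|\ge 0$, so the worst-$(1-\epsilon)$ outcomes of either loss correspond to the \emph{same} event $A:=\{W\le w_\epsilon\}$, where $w_\epsilon$ is the $(1-\epsilon)$-quantile of the folded normal $W$; on $A$ we have $\text{CVaR}_\epsilon^\mathbb{P}[-|X|]=-\mathbb{E}[W\mid A]$ and $\text{CVaR}_\epsilon^\mathbb{P}[-|X|^2]=-\mathbb{E}[W^2\mid A]$.

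For (a) I would apply Jensen's inequality to the conditional law on $A$, namely $\mathbb{E}[W^2\mid A]\ge(\mathbb{E}[W\mid A])^2$, which after negation is exactly $\text{CVaR}_\epsilon^\mathbb{P}[-|X|^2]\le-(\text{CVaR}_\epsilon^\mathbb{P}[-|X|])^2$. For (c), with $\mu=0$ the variable $W$ is half-normal, so $F_W(w)=erf(w/(\sigma\sqrt{2}))$ and $w_\epsilon=\sigma\sqrt{2}\,erf^{-1}(1-\epsilon)$; substituting the half-normal density and evaluating $\int_0^{w_\epsilon} w f_W(w)\,dw=\sqrt{2/\pi}\,\sigma\,[1-\exp(-w_\epsilon^2/(2\sigma^2))]$ with $w_\epsilon^2/(2\sigma^2)=[erf^{-1}(\epsilon-1)]^2$ yields $\text{CVaR}_\epsilon^\mathbb{P}[-|X|]=-\frac{1}{1-\epsilon}\sqrt{2/\pi}\,\sigma\,[1-\exp(-[erf^{-1}(\epsilon-1)]^2)]=\kappa\sigma$, and $\kappa<0$ because the bracket is positive. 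Both of these are essentially bookkeeping once the common-tail-event reduction is in place.

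For (b) I would first use the symmetry that $|X|$ has the same law under $\mu\mapsto-\mu$ to assume $\mu\ge 0$, and then show the folded-normal CDF $F_{|X|}(y)=\Phi((y-\mu)/\sigma)+\Phi((y+\mu)/\sigma)-1$ is nonincreasing in $\mu$: its $\mu$-derivative equals $\sigma^{-1}[\phi((y+\mu)/\sigma)-\phi((y-\mu)/\sigma)]\le 0$, since $y+\mu\ge|y-\mu|$ for $y,\mu\ge 0$ and $\phi$ decreases in the magnitude of its argument. Hence $|X|$ increases in $|\mu|$ in the first-order-stochastic-dominance order, so $-|X|$ decreases in that order; applying monotonicity of $\text{CVaR}$ (equivalently, each quantile $\text{VaR}_u^\mathbb{P}[-|X|]$ in the integral representation decreases) gives that $\text{CVaR}_\epsilon^\mathbb{P}[-|X|]$ is nonincreasing in $|\mu|$.

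The main obstacle is (d), where the tail of $-|X|$ no longer sits symmetrically about the origin. My plan is to exploit the two deterministic bounds $-|X|\le -X$ and $-|X|\le X$ together with the closed-form Gaussian value $\text{CVaR}_\epsilon[\mathcal{N}(m,s^2)]=m+\delta s$, where $\delta=\phi(\Phi^{-1}(\epsilon))/(1-\epsilon)$ and the identity $\Phi^{-1}(\epsilon)=\sqrt{2}\,erf^{-1}(2\epsilon-1)$ recovers the stated $\delta$. Monotonicity of $\text{CVaR}$ then gives $\text{CVaR}_\epsilon^\mathbb{P}[-|X|]\le\min\{\text{CVaR}_\epsilon[-X],\text{CVaR}_\epsilon[X]\}=\min\{-\mu,\mu\}+\delta\sigma=-|\mu|+\delta\sigma$. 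The delicate part is upgrading this to a \emph{strict} inequality: under the hypothesis (say $\mu>0$ together with $\text{VaR}_\epsilon^\mathbb{P}[X]>0$) the event $\{X<0\}$ has positive probability and lies inside the lower tail of $X$ that defines $\text{CVaR}_\epsilon[-X]$, so on a positive-measure set one has $-|X|<-X$ there; feeding this into the Rockafellar--Uryasev representation at the optimal threshold shows $\text{CVaR}_\epsilon^\mathbb{P}[-|X|]<\text{CVaR}_\epsilon[-X]=-|\mu|+\delta\sigma$, with the mirror argument handling $\mu<0$ via the bound $-|X|\le X$. I expect the bulk of the care to go into verifying that the sign condition on $\text{VaR}_\epsilon^\mathbb{P}[X]$ suffices to guarantee this positive-measure overlap, and hence the strictness, uniformly for $\epsilon\in[0,1)$ rather than only for $\epsilon\ge 1/2$.
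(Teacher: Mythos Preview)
Your proposal is correct, and for parts (a) and (c) it is essentially the paper's argument: identify the common tail event $\{|X|\le w_\epsilon\}$, apply Jensen for (a), and integrate the half-normal density for (c).

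For (b) and (d), however, you take a genuinely different and cleaner route than the paper. The paper proves (b) by brute-force differentiating the CVaR integral $\frac{1}{1-\epsilon}\int_{k(\mu)}^0 y\,f_{-|X|}(y)\,dy$ with respect to $\mu$, carrying along the implicit derivative $k'(\mu)$ of the VaR (computed in a separate lemma), and then showing the resulting expression has the right sign via a comparison of two Gaussian integrals. Your first-order stochastic dominance argument --- differentiating the folded-normal CDF and invoking monotonicity of CVaR under FOSD --- bypasses all of that machinery. One small remark: to match the paper's ``monotonically decreasing'' you should note that your CDF derivative $\sigma^{-1}[\phi((y+\mu)/\sigma)-\phi((y-\mu)/\sigma)]$ is \emph{strictly} negative for $y,\mu>0$, which yields strict dominance and hence strict monotonicity of the quantile integral.

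For (d), the paper proceeds by a chain of integral inequalities comparing $\int_k^\infty x f_X(x)\,dx$ to $\int_k^0 x f_{-|X|}(x)\,dx$ and then invoking a separate VaR comparison lemma to enlarge the integration domain. Your approach --- use the pointwise bound $-|X|\le -X$ (resp.\ $-|X|\le X$), plug the Gaussian CVaR formula $\text{CVaR}_\epsilon[\mathcal{N}(m,s^2)]=m+\delta s$, and upgrade to strict via the Rockafellar--Uryasev representation at the optimal threshold --- is more transparent. In fact your argument shows more than the lemma claims: the strict inequality $\text{CVaR}_\epsilon^\mathbb{P}[-|X|]<-|\mu|+\delta\sigma$ holds for every $\mu\neq 0$ and every $\epsilon\in[0,1)$, because the set where $(-X-z^*)^+>(-|X|-z^*)^+$ always contains $\{X<\min(0,-z^*)\}$, which has positive Gaussian measure. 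So the sign hypothesis on $\text{VaR}_\epsilon^\mathbb{P}[X]$ is not actually needed for the conclusion, and the ``bulk of the care'' you anticipated dissolves.
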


\ifarxiv
The proof of this lemma can be found in the Appendix.
\else
The proof of this lemma can be found in the Appendix of the extended version \cite{ham_dro-edl-mpc_2025}.
\fi 

According to Proposition~\ref{prop:sur_amb_set_solution}, the worst-case obstacle distribution varies depending on the ego vehicle position $c_i^x$. A one-dimensional example is illustrated in Fig. \ref{fig:ambiguity_set}. When the ego vehicle is distant from the obstacle $(c_i^x > \mu_{i,max}+\delta\cdot \sigma_{i,max})$, (an upper bound of) the optimal negative distance CVaR is attained at $\mathbb{P}_i^{sur*}=\mathcal{N}(\mu_{i,max}, \sigma_{i,max}^2)$ (filled red circle) within the surrogate ambiguity set. Within the original ambiguity set, $\mathbb{P}_i^*$ (empty red circle) denotes the worst-case obstacle distribution. In contrast, when the ego vehicle is close to the obstacle $(c_i^x\in\mathcal{I}_{i,unsafe})$, (an upper bound of) the maximum negative distance CVaR is attained at $\mathbb{P}_i^{sur*}=\mathcal{N}(c_i^x, \sigma_{i,min}^2)$ (filled green circle) within the surrogate ambiguity set. Within the original ambiguity set, $\mathbb{P}_i^*$ (empty green circle) denotes the worst-case distribution.

Proposition~\ref{prop:sur_amb_set_solution} describes how to identify the worst-case distribution $\mathbb{P}_i^{sur*}$ within the surrogate ambiguity set. Thus, we can directly find the worst-case obstacle distribution instead of focusing on the distribution of the loss function. 

However, calculating the extrema $\mu_{i,min}, \mu_{i,max}, \sigma_{i,min}$ and $ \sigma_{i,max}$  given the confidence threshold $\eta_i$ remains computationally challenging, as it requires complex numerical optimization problems. To mitigate this issue, we employ standardized solutions.
The NIG distribution consists of the normal distribution and the inverse gamma distribution. These distributions have location parameter $\gamma_i$ and scale parameter $\sigma_i, \beta_i$. Therefore, we can transpose $\text{NIG}(\mu_i,\sigma_i^2|\gamma_i,\lambda_i,\alpha_i,\beta_i)$ into $\text{NIG}(\mu_{z,i},\sigma_{z,i}^2|0,1,\alpha_i,1)$ using following relation:
\begin{equation}
    \begin{aligned}
    \mu_i = \mu_{z,i}\cdot\sqrt{\frac{\beta_i}{\lambda_i}}+\gamma_i, && \sigma_i=\sigma_{z,i}\cdot\sqrt{\beta_i}.
    \end{aligned}
    \label{eq:standardize}
\end{equation}
The shape parameter $\alpha_i$ remains unchanged as it is an intrinsic property of the distribution.
The contour of the transposed NIG distribution at confidence $\eta_i$ is calculated using numerical integration \cite{philip_methods_2007} and the extrema $(\mu_{z,i,min}, \mu_{z, i, max}, \sigma_{z, i, min}^2, \sigma_{z, i, max}^2)$ are calculated using the Brent's method \cite{richard_algorithms_2013}, which is a numerical root finding method. These extrema are computed offline and stored in the lookup table $\mathcal{M}_\alpha$ for different $\alpha\in [1.01, 10.00]$. At execution time, the perception model estimates the NIG distribution parameters $m_i$, and the extrema are queried from $\mathcal{M}_\alpha$ without computation overhead.
These extrema are transformed back to the extrema of the original distribution using (\ref{eq:standardize}). Consequently, the standardized approach and Proposition~\ref{prop:sur_amb_set_solution} enables determining the worst-case obstacle distribution $\mathbb{P}_i^{sur*}$ in a computationally tractable manner.

\subsection{Conservative constraint for tractable MPC} \label{subsec:4.2}
\begin{figure}[t]
\centering
\includegraphics[width=0.9\linewidth]{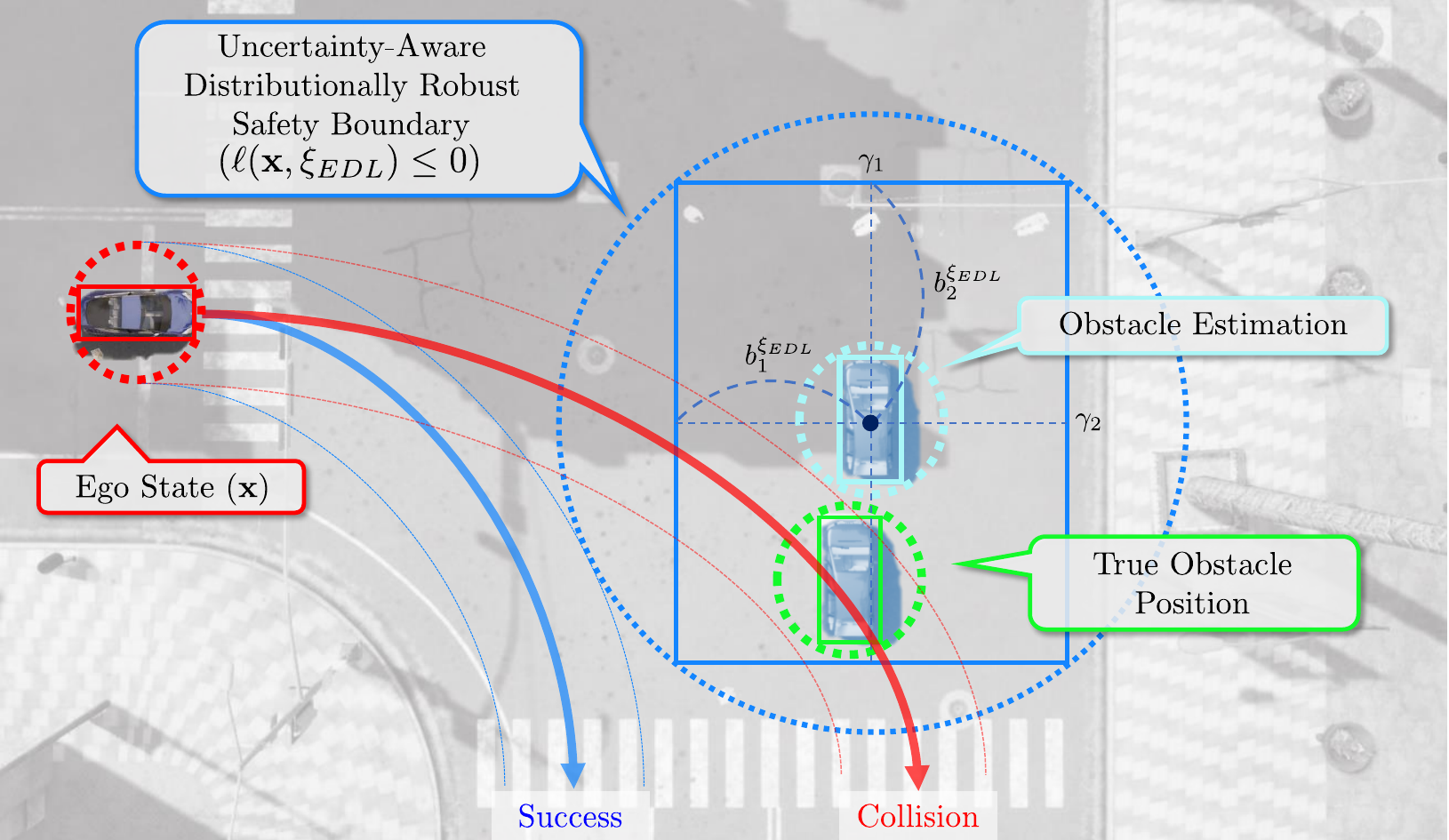}
\vspace*{-0.1in}
\caption{Estimated obstacle position (cyan circle) may differ from the true obstacle position (green circle). By introducing $\xi_{EDL}$ (large blue rectangle), we enforce the constraint $\ell(\mathbf{x}, \xi_{EDL})\leq 0$ (large blue circle), thereby satisfying the distributionally robust safety constraint.}
\label{fig:worst-case}
\vspace*{-0.2in}
\end{figure}

The safety constraint can be formulated using the worst-case obstacle distribution derived by Proposition~\ref{prop:sur_amb_set_solution}. However, these constraints vary depending on the ego position $c_i^x$, for example, whether $c_i^x\in \mathcal{I}_{i,\mu}$, and thus translate into a mixed-integer programming formulation, which can be solved using the Big-M method \cite{schouwenaars_mixed_2001}. To avoid the Big-M approach, which can be computationally expensive and prone to numerical instability, we introduce a conservative reformulation of the worst-case CVaR that is continuous with the ego state, eliminating the need for integer variables.

\begin{proposition}\label{prop:xi_edl}
    \textnormal{(Constraint reformulation).} Given the perception results $\mathbf{m}, \phi^\xi, v^\xi$, and $\mathbf{b}^\xi$, define    $\xi_{EDL} =(\mathbf{c}^{\xi_{EDL}}, \phi^{\xi_{EDL}}, v^{\xi_{EDL}})$ as
    \begin{equation*}
c_i^{\xi_{EDL}}:=\gamma_i,~~\phi^{\xi_{EDL}}:=\phi^\xi,~~v^{\xi_{EDL}}:=v^\xi,
    \end{equation*}
    and its attribute $\mathbf{b}^{\xi_{EDL}}$ as
    \begin{equation*}
        b_i^{\xi_{EDL}}:=(\mu_{i,max}-\mu_{i,min})/2+\delta\cdot\sigma_{i,max}+r^\xi,
    \end{equation*}
    where $\delta$ is provided in Proposition~\ref{prop:sur_amb_set_solution}. Then, for any $\mathbf{x}$,
    \begin{equation*}\label{eq:DR-EDL-CVaR_trac}
        \begin{aligned}
        \ell(\mathbf{x}, \xi_{EDL})\leq 0 \implies \text{DR-EDL-CVaR}_\epsilon^{\mathbb{D}(\eta|\mathbf{m})}[\ell(\mathbf{x},\xi)]\leq 0.
        \end{aligned}
    \end{equation*}
\end{proposition}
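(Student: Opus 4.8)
The plan is to collapse the distributionally robust constraint into the deterministic collision condition by feeding the per-axis bounds of Proposition~\ref{prop:sur_amb_set_solution} into a Euclidean-norm argument. First I would invoke Proposition~\ref{prop:sur_amb_set_solution} to write $\text{DR-EDL-CVaR}_\epsilon^{\mathbb{D}(\eta|\mathbf{m})}[\ell(\mathbf{x},\xi)] \le (r^x+r^\xi)^2 - \sum_{i}^{n_c} W_i^2$, where $W_i := \max_{\mathbb{P}_i\in\mathbb{D}^{sur}_i(\eta_i|m_i)}\text{CVaR}_\epsilon^{\mathbb{P}_i}[-|c_i^x-c_i^\xi|]$, and each $W_i<0$. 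It therefore suffices to show $\sum_i^{n_c} W_i^2 \ge (r^x+r^\xi)^2$, i.e. that $(|W_1|,\ldots,|W_{n_c}|)$ has Euclidean norm at least $r^x+r^\xi$.

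Next I would set $\rho_i := (\mu_{i,max}-\mu_{i,min})/2 + \delta\,\sigma_{i,max} = b_i^{\xi_{EDL}}-r^\xi \ge 0$ and $d_i := |c_i^x-\gamma_i|$. Since $\gamma_i$ is the midpoint of $[\mu_{i,min},\mu_{i,max}]$ and $\mathcal{I}_{i,unsafe}$ is symmetric about $\gamma_i$ with half-width $\rho_i$, the condition $c_i^x\in\mathcal{I}_{i,unsafe}$ is precisely $d_i\le\rho_i$. The three cases of Proposition~\ref{prop:sur_amb_set_solution} then collapse into one uniform lower bound $|W_i|\ge e_i := \max(0,\,d_i-\rho_i)$: when $c_i^x\notin\mathcal{I}_{i,unsafe}$ the third case gives $|W_i|>d_i-\rho_i=e_i$, while when $c_i^x\in\mathcal{I}_{i,unsafe}$ we have $d_i\le\rho_i$, so $e_i=0\le|W_i|$. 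Hence $\sum_i W_i^2\ge\sum_i e_i^2 = \|\mathbf{e}\|_2^2$ with $\mathbf{e}=(e_1,\ldots,e_{n_c})$, and it remains to prove $\|\mathbf{e}\|_2\ge r^x+r^\xi$.

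The closing step is purely geometric. Writing $\boldsymbol{\gamma}=\mathbf{c}^{\xi_{EDL}}=(\gamma_1,\ldots,\gamma_{n_c})$ and $\boldsymbol{\rho}=(\rho_1,\ldots,\rho_{n_c})$, the inequality $0\le d_i\le e_i+\rho_i$ holds componentwise with all entries nonnegative, so Minkowski's inequality gives $\|\mathbf{c}^x-\boldsymbol{\gamma}\|_2\le \|\mathbf{e}+\boldsymbol{\rho}\|_2\le\|\mathbf{e}\|_2+\|\boldsymbol{\rho}\|_2$. The hypothesis $\ell(\mathbf{x},\xi_{EDL})\le0$ reads $\|\mathbf{c}^x-\boldsymbol{\gamma}\|_2\ge r^x+r^{\xi_{EDL}}=r^x+\|\mathbf{b}^{\xi_{EDL}}\|_2$, whence $\|\mathbf{e}\|_2\ge r^x+\|\mathbf{b}^{\xi_{EDL}}\|_2-\|\boldsymbol{\rho}\|_2$. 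Finally, since $b_i^{\xi_{EDL}}=\rho_i+r^\xi$ with $\rho_i\ge0$, expanding $\|\mathbf{b}^{\xi_{EDL}}\|_2^2=\|\boldsymbol{\rho}\|_2^2+2r^\xi\sum_i\rho_i+n_c(r^\xi)^2$ and using $\sum_i\rho_i\ge\|\boldsymbol{\rho}\|_2$ yields $\|\mathbf{b}^{\xi_{EDL}}\|_2^2\ge(\|\boldsymbol{\rho}\|_2+r^\xi)^2$, so $\|\mathbf{b}^{\xi_{EDL}}\|_2-\|\boldsymbol{\rho}\|_2\ge r^\xi$ and therefore $\|\mathbf{e}\|_2\ge r^x+r^\xi$, which completes the chain.

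I expect the main obstacle to be the bookkeeping created by the fact that distinct axes may simultaneously fall into different cases of Proposition~\ref{prop:sur_amb_set_solution}; the uniform bound $|W_i|\ge\max(0,d_i-\rho_i)$ is precisely what lets me sidestep a per-axis case enumeration. The quantitative crux is the observation that inflating each boundary length by $r^\xi$ supplies exactly the slack $\|\mathbf{b}^{\xi_{EDL}}\|_2-\|\boldsymbol{\rho}\|_2\ge r^\xi$ required to recover the obstacle radius after the ``safe-region'' axes (those with $d_i\le\rho_i$) contribute nothing to $\mathbf{e}$.
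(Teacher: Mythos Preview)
Your proof is correct and follows essentially the same route as the paper: both arguments invoke Proposition~\ref{prop:sur_amb_set_solution} to bound the DR-EDL-CVaR by $(r^x+r^\xi)^2-\sum_i e_i^2$ with $e_i=\max(0,|c_i^x-\gamma_i|-h_i)$, then use the triangle inequality $\|\mathbf{c}^x-\boldsymbol{\gamma}\|_2\le\|\mathbf{e}\|_2+\|\vec{h}\|_2$ together with $\|\mathbf{b}^{\xi_{EDL}}\|_2\ge r^\xi+\|\vec{h}\|_2$ to close the implication. The paper organizes the same steps via a contrapositive argument and an intermediate obstacle $\tilde{\xi}$ of radius $r^\xi+\|\vec{h}\|_2$, whereas you proceed directly; your unified notation $e_i=\max(0,d_i-\rho_i)$ is exactly the paper's $\vec{v}-\vec{s}$, so the substance is identical.
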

\begin{proof}
Suppose $c_i^x\in \mathcal{I}_{i,unsafe}$ for $i=1,\ldots, n_{in}$ and $c_i^x\notin \mathcal{I}_{i,unsafe}$ for $i\in n_{in}+1, \ldots, n_c$. By Proposition~\ref{prop:sur_amb_set_solution},  
\begin{equation*}
    \begin{aligned}
        &\max_{\mathbb{P}\in\mathbb{D}(\eta|\mathbf{m})}\text{CVaR}_\epsilon^\mathbb{P}[\ell(\mathbf{x},\xi)]\\
        \leq &(r^x+r^\xi)^2-\sum_{i=1}^{n_{in}}(\kappa\cdot\sigma_{i,min})^2-\sum_{i={n_{in}+1}}^{n_{c}}(|c_i^x-\gamma_i|-h_i)^2,
            \end{aligned}
\end{equation*}
where $h_i=(\mu_{i,max}-\mu_{i,min})/2+\delta\cdot\sigma_{i,max}$. We denote the right-hand side by $f(\mathbf{x})$. 

Let us define a new obstacle state $\tilde{\xi}=(c^{\tilde{\xi}}, \phi^{\tilde{\xi}}, v^{\tilde{\xi}})$ with $c^{\tilde{\xi}}=\gamma_i,\phi^{\tilde{\xi}}=\phi^{\xi}, v^{\tilde{\xi}}=v^\xi, ||b_i^{\tilde{\xi}}||_2^2=r^\xi+\sqrt{\sum_i^{n_c}h_i^2}$, and let 
\begin{equation*}
    \begin{aligned} &g(\mathbf{x}):=\ell(\mathbf{x},\tilde{\xi}) =(r^x+r^\xi+\textstyle\sqrt{\sum_i^{n_c}h_i^2})^2-\sum_i^{n_c}(|c_i^x-\gamma_i|)^2.
    \end{aligned}
\end{equation*}
Note that $\tilde{\xi}$ is the same as $\xi_{EDL}$ except that the attribute ${b}_i^{\xi_{EDL}}=r^\xi+h_i$ incorporates the uncertainty margin $h_i$ directly into the size of the obstacle.

Given feasible sets $F:=\{\mathbf{x}|f(\mathbf{x})\leq 0\}$ and $G:=\{\mathbf{x}|g(\mathbf{x})\leq 0\}$, we aim to show $G\subseteq F$.
Let $\vec{v}(\mathbf{x}):=(|c_1^x-\gamma_1|, \ldots, |c_{n_c}^x-\gamma_{n_c}|), \vec{h}:=(h_1, \ldots, h_{n_c})$ and $s(\mathbf{x}):=(\min(h_1, |c_1^x-\gamma_1|), ..., \min(h_{n_c}, |c_{n_c}^x-\gamma_{n_c}|))$. Then,
\begin{align}
        \|\vec{v}(\mathbf{x})\|_2 &\leq \|\vec{s}(\mathbf{x})\|_2+\|\vec{v}(\mathbf{x})-\vec{s}(\mathbf{x})\|_2 \notag \\
        & \leq \|\vec{h}\|_2 + \textstyle\sqrt{\sum_{i={n_{in}+1}}^{n_{c}}{(|c_i^x-\gamma_i|-h_i)^2}}.\label{eq:v_s_inequality}
\end{align}
The first inequality is by the triangle inequality, and the second inequality is because 
$\|\vec{s}(\mathbf{x})\|_2\leq \|\vec{h}\|_2$ by definition and $s_i(\mathbf{x}) = |c_i^x-\gamma_i|$ for $i=1,\ldots, n_{in}$ and $h_i$ for $i=n_{in}+1, \ldots, n_c$.

Now, if $f(\mathbf{x})> 0$, we have 
$(r^x+r^\xi)^2-\sum_{i=n_{in}+1}^{n_{c}}(|c_i^x-\gamma_i|-h_i)^2 >0$ because $(\kappa\cdot \sigma_{i,min})^2\geq 0$. Therefore, \eqref{eq:v_s_inequality} becomes $\|\vec{v}(\mathbf{x})\|_2 < \|\vec{h}\|_2 + (r^x+r^\xi)$, and this leads to $g(\mathbf{x})>0$. This showed $F^c\subseteq G^c$ and therefore $G\subseteq F$. This implies that $g(\mathbf{x})\leq 0$ is more conservative than the original constraint $f(\mathbf{x})\leq 0$. 

To facilitate a simpler and more tractable representation of the obstacle attribute, we define the obstacle $\xi_{EDL}$ with $b_i^{\xi_{EDL}}=r^\xi+h_i$. This makes the radius $r^{\xi_{EDL}}=\|r^\xi+\vec{h}\|_2\geq r^\xi + \|\vec{h}\|_2$ and thus $\ell(\mathbf{x}, \xi_{EDL}) \geq \ell(\mathbf{x}, \tilde{\xi})$. Therefore, $\ell(\mathbf{x}, \xi_{EDL})\leq 0 \implies \ell(\mathbf{x},\tilde{\xi})\leq 0 \implies f(\mathbf{x})\leq 0 \implies \text{DR-EDL-CVaR}_\epsilon^{\mathbb{D}(\eta|\mathbf{m})}[\ell(\mathbf{x},\xi)]\leq 0.$
\end{proof}

This proposition defines a new obstacle, $\xi_{EDL}$, which provides a more conservative constraint that is continuous with respect to $c_i^x$. 
Fig. \ref{fig:worst-case} illustrates the perceived obstacle position, true obstacle position and the use of $\xi_{EDL}$.

\subsection{DRO-EDL-MPC Algorithm} \label{subsec:4.3}
The tractable constraint derived from Proposition \ref{prop:xi_edl} is integrated into the MPC framework. 
\begin{theorem}
    \textnormal{(DRO-EDL-MPC algorithm).} The solution of the following MPC problem is also a feasible solution to the original MPC problem \eqref{eq:mpc}.
    \begin{subequations}
    \begin{align}
\min_{\mathbf{u}} \quad & \sum_{t=0}^{T-1}c(\mathbf{x}(t),\mathbf{u}(t))+q(\mathbf{x}(T))\\
\textrm{s.t.} \quad 
  & \eqref{eq:MPC_state_action} - \eqref{eq:MPC_dynamics} \\ 
  & \ell(\mathbf{x}(t),\xi_{EDL})\leq 0, \ \forall t\in \mathbb{Z}_{0:T}\label{eq:thm1_loss}
    \end{align}
    \label{eq:mpc_trac}
\end{subequations}
\end{theorem}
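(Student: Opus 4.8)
The plan is to prove this as a feasible-set inclusion: I would show that the feasible set of the tractable problem \eqref{eq:mpc_trac} is contained in the feasible set of the original problem \eqref{eq:mpc}, so that any optimizer of \eqref{eq:mpc_trac}, being feasible for itself, is automatically feasible for \eqref{eq:mpc}. Because the two problems share the identical objective $\sum_{t=0}^{T-1}c(\mathbf{x}(t),\mathbf{u}(t))+q(\mathbf{x}(T))$, no optimality comparison is needed; the entire argument reduces to a constraint implication.

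First I would take an arbitrary $\mathbf{u}$ feasible for \eqref{eq:mpc_trac} together with its induced state trajectory $\mathbf{x}$. The state and input constraints \eqref{eq:MPC_state_action} and the dynamics \eqref{eq:MPC_dynamics} appear verbatim in both problems, so these hold for \eqref{eq:mpc} with no further work. The only constraint that differs is the safety constraint: \eqref{eq:mpc_trac} imposes $\ell(\mathbf{x}(t),\xi_{EDL})\le 0$, whereas \eqref{eq:mpc} imposes $\max_{\mathbb{P}\in\mathbb{D}}\text{CVaR}_\epsilon^{\mathbb{P}}[\ell(\mathbf{x}(t),\xi)]\le 0$.

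The core step is then a direct invocation of Proposition \ref{prop:xi_edl} at each time step. For every $t$ in the horizon, feasibility of \eqref{eq:thm1_loss} gives $\ell(\mathbf{x}(t),\xi_{EDL})\le 0$, and the implication \eqref{eq:DR-EDL-CVaR_trac} of Proposition \ref{prop:xi_edl} yields $\text{DR-EDL-CVaR}_\epsilon^{\mathbb{D}(\eta|\mathbf{m})}[\ell(\mathbf{x}(t),\xi)]\le 0$. By Definition \ref{definition:DR-EDL-CVaR} the left-hand side equals $\max_{\mathbb{P}\in\mathbb{D}(\eta|\mathbf{m})}\text{CVaR}_\epsilon^{\mathbb{P}}[\ell(\mathbf{x}(t),\xi)]$, which is precisely the distributionally robust safety constraint \eqref{eq:mpc_loss}. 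Hence every constraint of \eqref{eq:mpc} is satisfied, establishing the inclusion and therefore the claim.

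Since Proposition \ref{prop:xi_edl} carries the entire analytic burden, I do not expect a genuine obstacle: the theorem is essentially a corollary obtained by applying the proposition pointwise in $t$ and matching it to Definition \ref{definition:DR-EDL-CVaR}. The only point requiring care is bookkeeping on the time index, since the proposition must be applied at each step over the horizon; I would make sure the index range in \eqref{eq:thm1_loss} covers every $t$ for which \eqref{eq:mpc_loss} enforces the robust constraint, in particular verifying that the terminal index $t=T$ is not left uncovered.
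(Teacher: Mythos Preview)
Your proposal is correct and matches the paper's own proof almost exactly: both simply invoke Proposition~\ref{prop:xi_edl} at each time step to conclude that \eqref{eq:thm1_loss} implies the distributionally robust constraint \eqref{eq:mpc_loss}, the remaining constraints being shared verbatim. Your closing remark about the terminal index $t=T$ is a valid bookkeeping concern that the paper's brief proof does not address.
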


\vspace*{-0.25in}
\begin{proof}
    The safety constraint \eqref{eq:thm1_loss} results in $\text{DR-EDL-CVaR}_\epsilon^{\mathbb{D}(\eta|\mathbf{m})}[\ell(\mathbf{x},\xi)]\leq 0$ by Proposition \ref{prop:xi_edl}. Therefore, the optimal solution of \eqref{eq:mpc_trac} satisfies the distributionally robust safety constraint \eqref{eq:mpc_loss}.
\end{proof}

The overall scheme is described in Algorithm 1. 
The standardized solutions for the given confidence threshold $\eta_i$ are precomputed offline, and stored in the memory $\mathcal{M}_{\alpha}$ (line 1).
During execution, the ego vehicle initially receives the sensor observations $o$ and predicts the NIG parameters $\mathbf{m}$ using the perception model (line 2).
The GPR-based dynamics model is trained from the collected state and control input pairs (line 3). 
The distributionally robust constraint is computed using the NIG parameters $\mathbf{m}$ and the precomputed memory $\mathcal{M}_{\alpha}$ (line 5). 
We solve the MPC problem~\eqref{eq:mpc_trac} with this distributionally robust constraint (line 6), and the first component of the optimal control input is applied to the environment (line 7). 
The next ego state is determined, and the algorithm repeats the same procedure.

\begin{algorithm}
\caption{DRO-EDL-MPC}\label{algo:algo1}
\begin{algorithmic}[1]
\Require Pretrained perception model weight $w$, confidence threshold $\eta$, CVaR probability $\epsilon$
\State Construct lookup table $\mathcal{M}_{\alpha},~~\forall\alpha\in[1.01,10.00]$
\State Predict $\mathbf{m}=\mathcal{F}(o|w)$
\State Collect dynamics observations $\mathbf{X}_t, \mathbf{Y}_t$ and train GPR.
\For{$t = 0,1, ...$}
    \State Compute $\xi_{EDL}$ in \eqref{eq:thm1_loss} using $\mathbf{m}$ and $\mathcal{M}_{\alpha}(\eta_i)$
    \State Solve \eqref{eq:mpc_trac} and get the optimal solution $\mathbf{u}^*$
    \State Apply $\mathbf{u}^*(0)$ to the environment
\EndFor
\end{algorithmic}
\end{algorithm}

\section{Experiments}

In this section, we validate the DRO-EDL-MPC algorithm (Algorithm~\ref{algo:algo1}) in the CARLA simulator and demonstrate its less conservative behavior under confident perception and more conservative behavior under uncertain environment.

\subsection{Experiment Settings}

The objective of the ego vehicle is to reach its destination while avoiding collisions with obstacles. The ego vehicle is equipped with a LiDAR and camera and uses the MEDL-U algorithm \cite{paat_medl-u_2024} as the EDL model. It takes LiDAR point clouds, camera images, and 2D bounding box predictions to predict 3D bounding boxes with uncertainty, represented as NIG distributions. We use YOLOv8 model \cite{glenn_ultralytics_2023} trained on COCO \cite{lin_microsoft_2014} to predict 2D bounding box and MEDL-U trained on the KITTI dataset \cite{geiger_vision_2013} using only the car class.

To compare confident and uncertain perception scenarios, we conduct experiments with different obstacle classes and sensor configurations. In the confident case, a car is used as the obstacle, and the LiDAR is mounted on the ego vehicle at the same height as in the training setup, ensuring that the input data remains in-distribution. In contrast, in the uncertain case, a motorcycle is used as the obstacle, and the LiDAR is mounted at a different height, creating out-of-distribution inputs. The EDL model estimates prediction uncertainty, allowing us to analyze how the distributionally robust safety constraint varies with different levels of uncertainty. For simplicity, $x$ and $y$ positions $(c_1^\xi, c_2^\xi)$ are considered with uncertainty distribution and $z$ coordinate is ignored because the height information of ground obstacles has minimal impact on collision avoidance.

Our proposed DRO-EDL-MPC algorithm generates trajectories of the ego vehicle to track waypoints. We use the same MPC parameter settings as described in \cite{hakobyan_distributionally_2023}. We use a kinematic bicycle model with control input $\mathbf{u}(t)$  consisting of acceleration and steering angle. The step-wise cost is $c(\mathbf{x}(t), \mathbf{u}(t))=\|\mathbf{x}(t)-p_t\|^2_Q+\|\Delta \mathbf{u}(t)\|^2_R$ to minimize the distance to nearby waypoints $p_t$ and reduce control input changes $\Delta \mathbf{u}(t)=\mathbf{u}(t)-\mathbf{u}(t-1)$. The terminal cost is $q(\mathbf{x}(T))=\|\mathbf{x}(T)-p_T\|^2_Q$, minimizing the distance between the ego state and the waypoint $p_T$ at the end of the horizon. We set $T=40, Q=diag(1,1,0,0.2),$ and $R=diag(1.5, 3)$.
The unknown dynamics model is learned via GPR using $M=50$ samples. This MPC problem is solved using ForcesPro \cite{zanelli_forces_2020}. 
To construct other baselines, we change the safety constraints within this common setting.
\begin{figure}[t]
\begin{center}
\includegraphics[width=1\linewidth]{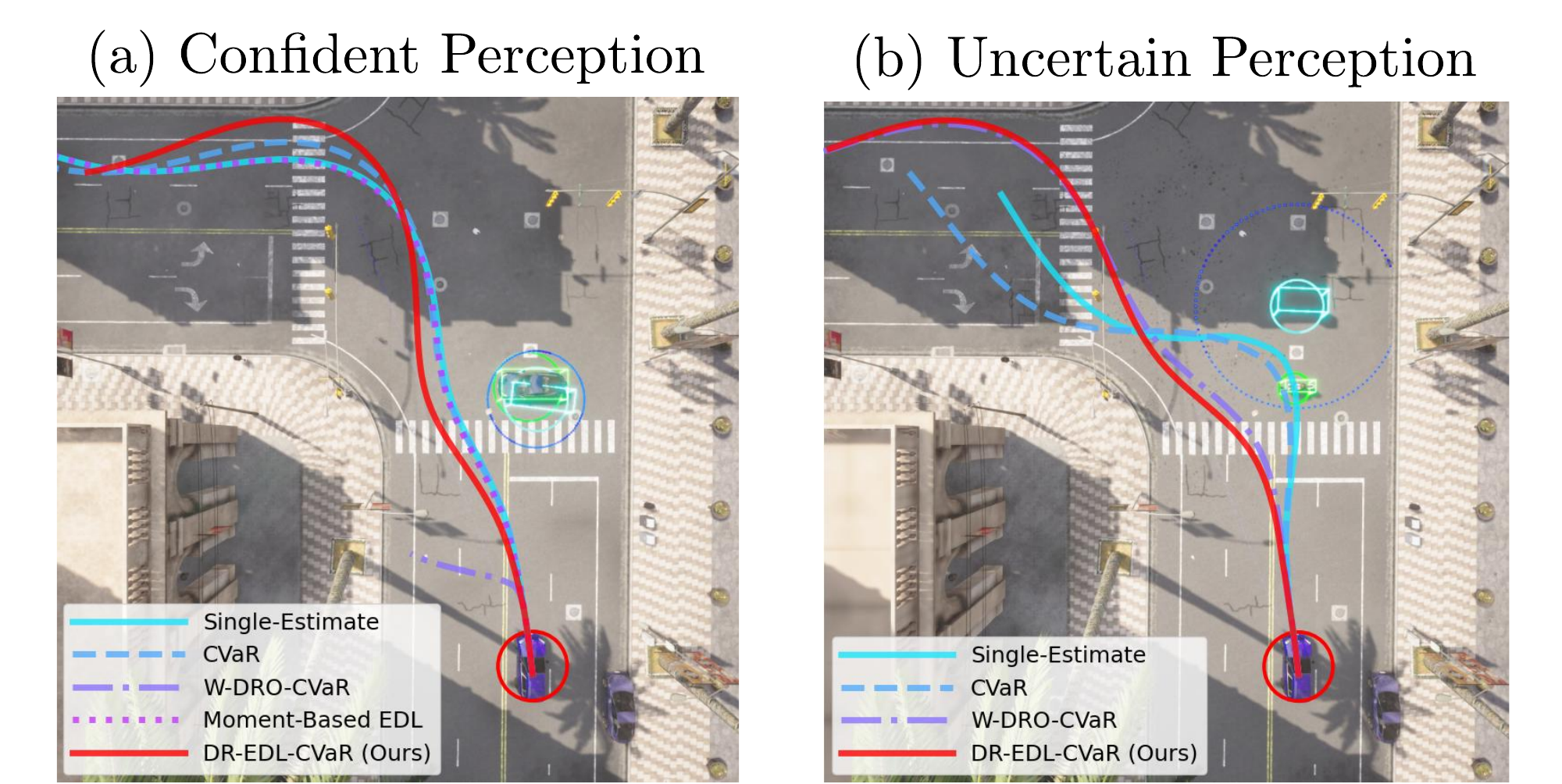}
\vspace*{-0.2in}
\end{center}
\caption{Comparison in (a) confident and (b) uncertain perception scenarios. 
Ego position (red), obstacle position (green), estimated obstacle position (cyan), and safety boundary (blue) are visualized.}
\vspace*{-0.25in}
\label{fig:simulation_result}
\end{figure}
We compare the following safety constraint baselines. 
\begin{itemize}
    \item \textbf{(Single-Estimate)} uses a single prediction of the obstacle state and thus a deterministic safety constraint. 
    \item \textbf{(CVaR)} models the data distribution as $\mathcal{N}(\mathbb{E}[\mu], \mathbb{E}[\sigma^2])$ based on the EDL estimation and formulates the corresponding CVaR risk metric.
    \item \textbf{(W-DRO-CVaR \cite{hakobyan_distributionally_2023})} enforces distributionally robust CVaR considering all distributions whose Wasserstein distance from the loss distribution $\mathbb{P}_{loss}$ is within 0.1.
    \item \textbf{(Moment-Based EDL \cite{wang_signal-devices_2025})} 
    enforces distributionally robust CVaR, utilizing the variance of the moments $Var[\mu], Var[\sigma^2]$ estimated from the EDL model as a confidence interval of the ambiguity set with a confidence coefficient of 0.5. 
    \item \textbf{(DR-EDL-CVaR)} is the proposed distributionally robust safety constraint with $\eta=0.9$. 
\end{itemize}
All stochastic methods use CVaR with $\epsilon=0.9$. 

We evaluate all methods over 100 runs. Specifically, we measure a collision rate with obstacles and a success rate of reaching the destination without collision. Among successful runs, we also report the average total cost and the average minimum distance to the estimated obstacle center $\|c^x-c^\xi\|_2$,
and the average optimization time.
\begin{figure}[t]
\begin{center}
\includegraphics[width=1\linewidth]{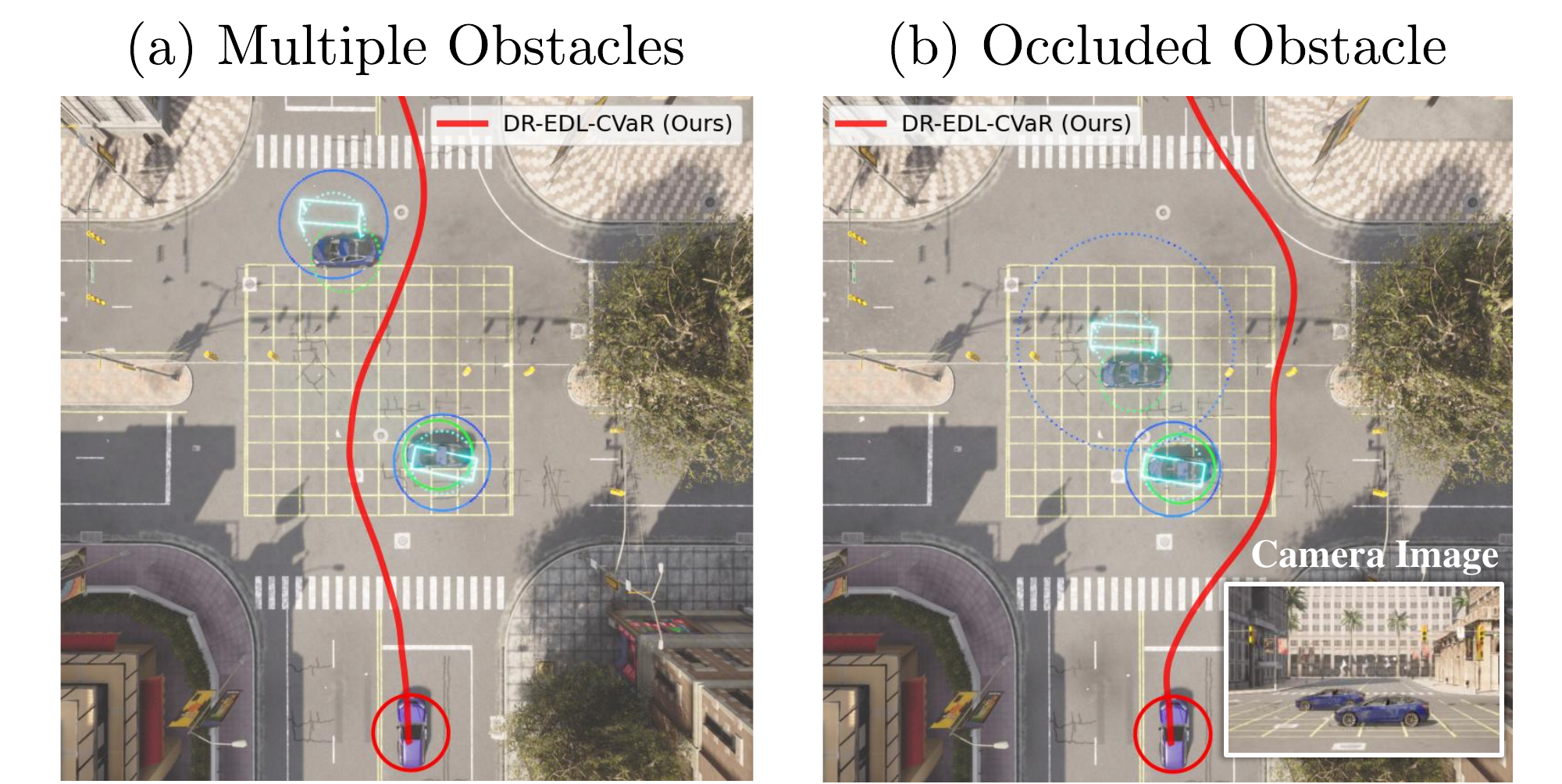}
\vspace*{-0.35in}
\end{center}
\caption{Illustration of our method in (a) multiple-obstacles and (b) occluded-obstacle scenarios.}
\label{fig:ablation}
\end{figure}

\begin{table}[h]
    \caption{Comparison in confident \& uncertain scenarios}
    \label{tab:result_1}
    \centering
    \resizebox{\linewidth}{!}{
    \begin{tabular}{cc c c c c c}
        \toprule
        Scenario & Method & {Succ.(\%) $\uparrow$} & {Coll. (\%) $\downarrow$} & {Cost $\downarrow$} & {Distance} & {Time (ms) $\downarrow$} \\
        \midrule
        \multirow{5}{*}{Confident} 
        & Single-Estimate   &   100 &   0   & $6.576\times 10^5$  & 2.625 & 220.5 \\
        & CVaR              &   100  &   0   & $1.585\times 10^{6}$ & 4.442 & 201.8 \\
        
        & W-DRO-CVaR            &   29  &   0   & $3.786\times 10^{11}$ & 7.672 & 1064  \\
        & Moment-Based EDL  &   78  &   0   & $2.281\times 10^7$  & 2.640 & 219.7 \\
        & DR-EDL-CVaR (Ours)    &   100 &   0   & $6.072\times 10^7$  & 5.032 & 203.5 \\
        \midrule
        \multirow{5}{*}{Uncertain} 
        & Single-Estimate   &   0     & 100 &  -                  &   -      & 306.4 \\
        & CVaR              &   53    &  44 &  $1.191\times 10^{5}$ &  6.848  & 199.5 \\
        & W-DRO-CVaR             &   65    &   2 &  $1.144\times 10^{11}$ &   11.436& 1168  \\
        & Moment-Based EDL  &     -   &   - &   -                &    -    &  -    \\
        & DR-EDL-CVaR (Ours)    &   95    &   2 &  $6.831\times 10^5$  &   8.014 &  223.1\\
        \bottomrule
    \end{tabular}
    }
\vspace*{-0.15in}

\end{table}

\subsection{Results}

The results of the confident perception experiments are illustrated in Fig. \ref{fig:simulation_result} (a) and Table \ref{tab:result_1}. All methods accurately infer obstacle positions, thereby causing no collisions. The Single-Estimate and CVaR approaches successfully reach destinations in all runs while maintaining low cost and minimal distance.
In contrast, the W-DRO-CVaR method maintains larger distances from the obstacle and shows a low success rate as it leads to overly conservative behavior and infeasibility due to the use of the unscented transform. The Moment-Based EDL method yields a similar conservative distance to the Single-Estimate approach, but suffers from a low success rate. This is because it is only applicable when perception is highly confident ($\alpha>2$). 
Our DR-EDL-CVaR approach constructs a small ambiguity set under confident perception, leading to a slight increase in the distance and cost compared to CVaR, thereby revealing a clear performance–conservativeness trade-off. At the same time, it remains less conservative and achieves a higher success rate than W-DRO-CVaR. 

The results of uncertain perception experiments are shown in Fig. \ref{fig:simulation_result} (b) and Table \ref{tab:result_1}. The uncertainty stems from model uncertainty because the perception model encounters a motorbike despite being trained only on the car class. The Single-Estimate and CVaR methods exhibit high collision rates due to their inability to handle model uncertainty.
In contrast, W-DRO-CVaR and our proposed method consider model uncertainty using the ambiguity set, resulting in significantly lower collision rates. 
The Moment-Based EDL method is inapplicable in all uncertain perception scenarios.
Our method attains the highest success rate, but still fails in 5\% of cases under uncertain perception. This is due to rare observations with extremely high uncertainty that make the controller overly conservative and cause it to get stuck.

Our DR-EDL-CVaR method is the only one that operates less conservatively under confident perception and more conservatively under uncertain perception, ensuring both efficiency and safety across varying perception confidence levels. These results confirm that our safety constraint formulation proves effective for safe autonomous driving.
Additionally, our method demonstrates a 5 times improvement in computational efficiency compared to the W-DRO-CVaR method while comparable to other simpler approaches.

\subsection{Multiple Obstacles}
We have also validated our method in scenarios with multiple obstacles and with occluded obstacles, as shown in Fig.~\ref{fig:ablation}. The results show that the method readily extends to multi-obstacle collision avoidance and provides a clear advantage in the occluded-obstacle case. When the rear obstacle is partially hidden by the front one, the resulting perception uncertainty enlarges the adaptive safety boundary, ensuring safe and robust operation.




\section{CONCLUSION}

We have proposed a safe motion planning approach that integrates EDL-based perception in uncertain environments. Our constraint formulation method, DR-EDL-CVaR, ensures safety through a distributionally robust safety constraint that accounts for the estimated uncertainty. Additionally, we have introduced DRO-EDL-MPC, a conservative yet tractable motion planning algorithm based on the upper bound of the distributionally robust safety loss and standardization of the NIG distribution. Experiments demonstrate that our approach is the only method that can produce safe motion in out-of-distribution scenarios without additional training, while exhibiting low conservativeness in in-distribution scenarios and achieving great efficiency compared to other methods. Our future work will address multiple dynamic obstacles in real-world environment settings.








\printbibliography

@inproceedings{lin_microsoft_2014,
	title = {Microsoft {COCO}: {Common} {Objects} in {Context}},
	shorttitle = {Microsoft {COCO}},
	abstract = {We present a new dataset with the goal of advancing the state-of-the-art in object recognition by placing the question of object recognition in the context of the broader question of scene understanding. This is achieved by gathering images of complex everyday scenes containing common objects in their natural context. Objects are labeled using per-instance segmentations to aid in precise object localization. Our dataset contains photos of 91 objects types that would be easily recognizable by a 4 year old. With a total of 2.5 million labeled instances in 328k images, the creation of our dataset drew upon extensive crowd worker involvement via novel user interfaces for category detection, instance spotting and instance segmentation. We present a detailed statistical analysis of the dataset in comparison to PASCAL, ImageNet, and SUN. Finally, we provide baseline performance analysis for bounding box and segmentation detection results using a Deformable Parts Model.},
	language = {en},
	booktitle = {Proc. {Eur}. {Conf}. {Comput}. {Vis}.},
	author = {Lin, Tsung-Yi and Maire, Michael and Belongie, Serge and Bourdev, Lubomir and Girshick, Ross and Hays, James and Perona, Pietro and Ramanan, Deva and Zitnick, C. Lawrence and Dollár, Piotr},
	year = {2014},
	keywords = {Computer Science - Computer Vision and Pattern Recognition},
}

@article{ham_dro-edl-mpc_2025,
	title = {{DRO}-{EDL}-{MPC}: {Evidential} {Deep} {Learning}-{Based} {Distributionally} {Robust} {Model} {Predictive} {Control} for {Safe} {Autonomous} {Driving}},
	url = {https://arxiv.org/abs/2507.05710},
	author = {Ham, Hyeongchan and Ahn, Heejin},
	year = {2025},
	note = {arxiv:2507.05710},
}

@inproceedings{wang_online-learning-based_2024,
	title = {Online-{Learning}-{Based} {Distributionally} {Robust} {Motion} {Control} with {Collision} {Avoidance} for {Mobile} {Robots}},
	copyright = {https://doi.org/10.15223/policy-029},
	isbn = {979-8-3503-8457-4},
	url = {https://ieeexplore.ieee.org/document/10610765/},
	doi = {10.1109/ICRA57147.2024.10610765},
	language = {en},
	urldate = {2025-10-16},
	booktitle = {Proc. {IEEE} {Int}. {Conf}. {Robot}. {Automat}.},
	author = {Wang, Han and Ning, Chao and Li, Longyan and Zhang, Weidong},
	month = may,
	year = {2024},
	pages = {1241--1246},
}

@article{hakobyan_wasserstein_2022,
	title = {Wasserstein {Distributionally} {Robust} {Motion} {Control} for {Collision} {Avoidance} {Using} {Conditional} {Value}-at-{Risk}},
	volume = {38},
	url = {http://arxiv.org/abs/2001.04727},
	doi = {10.48550/arXiv.2001.04727},
	abstract = {In this paper, a risk-aware motion control scheme is considered for mobile robots to avoid randomly moving obstacles when the true probability distribution of uncertainty is unknown. We propose a novel model predictive control (MPC) method for limiting the risk of unsafety even when the true distribution of the obstacles’ movements deviates, within an ambiguity set, from the empirical distribution obtained using a limited amount of sample data. By choosing the ambiguity set as a statistical ball with its radius measured by the Wasserstein metric, we achieve a probabilistic guarantee of the out-of-sample risk, evaluated using new sample data generated independently of the training data. To resolve the inﬁnite-dimensionality issue inherent in the distributionally robust MPC problem, we reformulate it as a ﬁnite-dimensional nonlinear program using modern distributionally robust optimization techniques based on the Kantorovich duality principle. To ﬁnd a globally optimal solution in the case of aﬃne dynamics and output equations, a spatial branch-and-bound algorithm is designed using McCormick relaxation. The performance of the proposed method is demonstrated and analyzed through simulation studies using a nonlinear car-like vehicle model and a linearized quadrotor model.},
	language = {en},
	urldate = {2025-10-19},
	journal = {IEEE Trans. Robot.},
	author = {Hakobyan, Astghik and Yang, Insoon},
	year = {2022},
	keywords = {Computer Science - Robotics, Computer Science - Systems and Control, Electrical Engineering and Systems Science - Systems and Control},
	pages = {939--957},
}

@article{meinert_multivariate_2022,
	title = {Multivariate {Deep} {Evidential} {Regression}},
	url = {http://arxiv.org/abs/2104.06135},
	abstract = {There is signiﬁcant need for principled uncertainty reasoning in machine learning systems as they are increasingly deployed in safety-critical domains. A new approach with uncertainty-aware neural networks (NNs), based on learning evidential distributions for aleatoric and epistemic uncertainties, shows promise over traditional deterministic methods and typical Bayesian NNs, yet several important gaps in the theory and implementation of these networks remain. We discuss three issues with a proposed solution to extract aleatoric and epistemic uncertainties from regression-based neural networks. The approach derives a technique by placing evidential priors over the original Gaussian likelihood function and training the NN to infer the hyperparameters of the evidential distribution. Doing so allows for the simultaneous extraction of both uncertainties without sampling or utilization of out-of-distribution data for univariate regression tasks. We describe the outstanding issues in detail, provide a possible solution, and generalize the deep evidential regression technique for multivariate cases.},
	language = {en},
	urldate = {2024-08-08},
	author = {Meinert, Nis and Lavin, Alexander},
	month = feb,
	year = {2022},
	note = {arXiv:2104.06135},
	keywords = {Computer Science - Machine Learning, Computer Science - Neural and Evolutionary Computing, Statistics - Machine Learning},
}

@article{marvi_evidential_2025,
	title = {Evidential {Uncertainty} {Estimation} for {Multi}-{Modal} {Trajectory} {Prediction}},
	url = {http://arxiv.org/abs/2503.05274},
	doi = {10.48550/arXiv.2503.05274},
	abstract = {Accurate trajectory prediction is crucial for autonomous driving, yet uncertainty in agent behavior and perception noise makes it inherently challenging. While multimodal trajectory prediction models generate multiple plausible future paths with associated probabilities, effectively quantifying uncertainty remains an open problem. In this work, we propose a novel multi-modal trajectory prediction approach based on evidential deep learning that estimates both positional and mode probability uncertainty in real time. Our approach leverages a Normal Inverse Gamma distribution for positional uncertainty and a Dirichlet distribution for mode uncertainty. Unlike sampling-based methods, it infers both types of uncertainty in a single forward pass, significantly improving efficiency. Additionally, we experimented with uncertainty-driven importance sampling to improve training efficiency by prioritizing underrepresented high-uncertainty samples over redundant ones. We perform extensive evaluations of our method on the Argoverse 1 and Argoverse 2 datasets, demonstrating that it provides reliable uncertainty estimates while maintaining high trajectory prediction accuracy.},
	language = {en},
	urldate = {2025-10-21},
	author = {Marvi, Sajad and Rist, Christoph and Schmidt, Julian and Jordan, Julian and Valada, Abhinav},
	month = mar,
	year = {2025},
	note = {arXiv:2503.05274},
	keywords = {Computer Science - Artificial Intelligence, Computer Science - Robotics},
}

@inproceedings{schouwenaars_mixed_2001,
	title = {Mixed {Integer} {Programming} for {Multi}-{Vehicle} {Path} {Planning}},
	copyright = {https://doi.org/10.15223/policy-029},
	isbn = {978-3-9524173-6-2},
	url = {https://ieeexplore.ieee.org/document/7076321/},
	doi = {10.23919/ECC.2001.7076321},
	abstract = {This paper presents a new approach to fuel-optimal path planning of multiple vehicles using a combination of linear and integer programming. The basic problem formulation is to have the vehicles move from an initial dynamic state to a ﬁnal state without colliding with each other, while at the same time avoiding other stationary and moving obstacles. It is shown that this problem can be rewritten as a linear program with mixed integer/linear constraints that account for the collision avoidance. A key beneﬁt of this approach is that the path optimization can be readily solved using the CPLEX optimization software with an AMPL/Matlab interface. An example is worked out to show that the framework of mixed integer/linear programming is well suited for path planning and collision avoidance problems. Implementation issues are also considered. In particular, we compare receding horizon strategies with ﬁxed arrival time approaches.},
	language = {en},
	urldate = {2025-03-21},
	booktitle = {Proc. {Eur}. {Controls} {Conf}.},
	author = {Schouwenaars, Tom and De Moor, Bart and Feron, Eric and How, Jonathan},
	month = sep,
	year = {2001},
	pages = {2603--2608},
}

@inproceedings{aolaritei_wasserstein_2023,
	title = {Wasserstein {Tube} {MPC} with {Exact} {Uncertainty} {Propagation}},
	copyright = {https://doi.org/10.15223/policy-029},
	isbn = {979-8-3503-0124-3},
	url = {https://ieeexplore.ieee.org/document/10383526/},
	doi = {10.1109/CDC49753.2023.10383526},
	abstract = {We study model predictive control (MPC) problems for stochastic LTI systems, where the noise distribution is unknown, compactly supported, and only observable through a limited number of i.i.d. noise samples. Building upon recent results in the literature, which show that distributional uncertainty can be efficiently captured within a Wasserstein ambiguity set, and that such ambiguity sets propagate exactly through the system dynamics, we start by formulating a novel Wasserstein Tube MPC (WT-MPC) problem. We then show that the WT-MPC problem: (1) is a direct generalization of the (deterministic) Robust Tube MPC (RT-MPC) to the stochastic setting; (2) through a scalar parameter, it interpolates between the data-driven formulation based on sample average approximation and the RT-MPC formulation, allowing us to optimally trade between safety and performance; (3) admits a tractable convex reformulation; and (4) is recursively feasible. We conclude with a numerical comparison of WT-MPC and RT-MPC.},
	language = {en},
	urldate = {2025-10-16},
	booktitle = {Proc. 62nd {IEEE} {Conf}. {Decis}. {Control}},
	author = {Aolaritei, Liviu and Fochesato, Marta and Lygeros, John and Dörfler, Florian},
	month = dec,
	year = {2023},
	pages = {2036--2041},
}

@article{long_sensor-based_2025,
	title = {Sensor-based distributionally robust control for safe robot navigation in dynamic environments},
	issn = {0278-3649, 1741-3176},
	url = {https://journals.sagepub.com/doi/10.1177/02783649251352000},
	doi = {10.1177/02783649251352000},
	abstract = {We introduce a novel method for mobile robot navigation in dynamic, unknown environments, leveraging onboard sensing and distributionally robust optimization to impose probabilistic safety constraints. Our method introduces a distributionally robust control barrier function (DR-CBF) that directly integrates noisy sensor measurements and state estimates to deﬁne safety constraints. This approach is applicable to a wide range of control-afﬁne dynamics, generalizable to robots with complex geometries, and capable of operating at real-time control frequencies. Coupled with a control Lyapunov function (CLF) for path following, the proposed CLF-DR-CBF control synthesis method achieves safe, robust, and efﬁcient navigation in challenging environments. We demonstrate the effectiveness and robustness of our approach for safe autonomous navigation under uncertainty and dynamic obstacles in simulations and real-world experiments with differential-drive robots.},
	language = {en},
	urldate = {2025-10-16},
	journal = {Int. J. Robot. Res.},
	author = {Long, Kehan and Yi, Yinzhuang and Dai, Zhirui and Herbert, Sylvia and Cortés, Jorge and Atanasov, Nikolay},
	month = jul,
	year = {2025},
}

@article{esfahani_data-driven_2017,
	title = {Data-driven {Distributionally} {Robust} {Optimization} {Using} the {Wasserstein} {Metric}: {Performance} {Guarantees} and {Tractable} {Reformulations}},
	shorttitle = {Data-driven {Distributionally} {Robust} {Optimization} {Using} the {Wasserstein} {Metric}},
	url = {http://arxiv.org/abs/1505.05116},
	doi = {10.48550/arXiv.1505.05116},
	abstract = {We consider stochastic programs where the distribution of the uncertain parameters is only observable through a ﬁnite training dataset. Using the Wasserstein metric, we construct a ball in the space of (multivariate and non-discrete) probability distributions centered at the uniform distribution on the training samples, and we seek decisions that perform best in view of the worst-case distribution within this Wasserstein ball. The state-of-the-art methods for solving the resulting distributionally robust optimization problems rely on global optimization techniques, which quickly become computationally excruciating. In this paper we demonstrate that, under mild assumptions, the distributionally robust optimization problems over Wasserstein balls can in fact be reformulated as ﬁnite convex programs—in many interesting cases even as tractable linear programs. Leveraging recent measure concentration results, we also show that their solutions enjoy powerful ﬁnite-sample performance guarantees. Our theoretical results are exempliﬁed in mean-risk portfolio optimization as well as uncertainty quantiﬁcation.},
	language = {en},
	urldate = {2025-10-15},
	journal = {Math. Prog.},
	author = {Esfahani, Peyman Mohajerin and Kuhn, Daniel},
	month = jun,
	year = {2017},
	keywords = {Mathematics - Optimization and Control, Statistics - Computation},
}

@inproceedings{yu_general_2009,
	title = {A {General} {Projection} {Property} for {Distribution} {Families}},
	volume = {22},
	abstract = {Surjectivity of linear projections between distribution families with ﬁxed mean and covariance (regardless of dimension) is re-derived by a new proof. We further extend this property to distribution families that respect additional constraints, such as symmetry, unimodality and log-concavity. By combining our results with classic univariate inequalities, we provide new worst-case analyses for natural risk criteria arising in classiﬁcation, optimization, portfolio selection and Markov decision processes.},
	language = {en},
	booktitle = {Proc. {Adv}. {Neural} {Inf}. {Process}. {Syst}.},
	author = {Yu, Yao-liang and Li, Yuxi and Schuurmans, Dale and Szepesvári, Csaba},
	year = {2009},
}

@article{zanelli_forces_2020,
	title = {{FORCES} {NLP}: an {Efficient} {Implementation} of {Interior}-{Point} {Methods} for {Multistage} {Nonlinear} {Nonconvex} {Programs}},
	volume = {93},
	issn = {0020-7179, 1366-5820},
	shorttitle = {{FORCES} {NLP}},
	url = {https://www.tandfonline.com/doi/full/10.1080/00207179.2017.1316017},
	doi = {10.1080/00207179.2017.1316017},
	abstract = {Real-time implementation of optimisation-based control and trajectory planning can be very challenging for nonlinear systems. As a result, if an implementation based on a fixed linearisation is not suitable, the nonlinear problems are typically locally approximated online, in order to leverage the speed and robustness of embedded solvers for convex quadratic programs (QP) developed during the last decade. The purpose of this paper is to demonstrate that, using simple standard building blocks from nonlinear programming, combined with a structure-exploiting linear system solver, it is possible to achieve computation times in the range typical of solvers for QPs, while retaining nonlinearities and solving the nonlinear programs (NLP) to local optimality. The implemented algorithm is an interior-point method with approximate Hessians and adaptive barrier rules, and is provided as an extension to the C code generator FORCES. Three detailed examples are provided that illustrate a significant improvement in control performance when solving NLPs, with computation times that are comparable with those achieved by fast approximate schemes and up to an order of magnitude faster than the state-of-the-art interior-point solver IPOPT.},
	language = {en},
	number = {1},
	urldate = {2025-04-28},
	journal = {Int. J. Control},
	author = {Zanelli, A. and Domahidi, A. and Jerez, J. and Morari, M.},
	year = {2020},
	pages = {13--29},
}

@inproceedings{safaoui_distributionally_2024,
	title = {Distributionally {Robust} {CVaR}-{Based} {Safety} {Filtering} for {Motion} {Planning} in {Uncertain} {Environments}},
	copyright = {https://doi.org/10.15223/policy-029},
	isbn = {979-8-3503-8457-4},
	url = {https://ieeexplore.ieee.org/document/10611276/},
	doi = {10.1109/ICRA57147.2024.10611276},
	abstract = {Safety is a core challenge of autonomous robot motion planning, especially in the presence of dynamic and uncertain obstacles. Many recent results use learning and deep learning-based motion planners and prediction modules to predict multiple possible obstacle trajectories and generate obstacle-aware ego robot plans. However, planners that ignore the inherent uncertainties in such predictions incur collision risks and lack formal safety guarantees. In this paper, we present a computationally efficient safety filtering solution to reduce the collision risk of ego robot motion plans using multiple samples of obstacle trajectory predictions. The proposed approach reformulates the collision avoidance problem by computing safe halfspaces based on obstacle sample trajectories using distributionally robust optimization (DRO) techniques. The safe halfspaces are used in a model predictive control (MPC)-like safety filter to apply corrections to the reference ego trajectory thereby promoting safer planning. The efficacy and computational efficiency of our approach are demonstrated through numerical simulations.},
	language = {en},
	urldate = {2025-02-09},
	booktitle = {Proc. {IEEE} {Int}. {Conf}. {Robot}. {Automat}.},
	author = {Safaoui, Sleiman and Summers, Tyler H.},
	month = may,
	year = {2024},
	pages = {103--109},
}

@inproceedings{hakobyan_distributionally_2023,
	title = {Distributionally {Robust} {Optimization} with {Unscented} {Transform} for {Learning}-{Based} {Motion} {Control} in {Dynamic} {Environments}},
	copyright = {https://doi.org/10.15223/policy-029},
	isbn = {979-8-3503-2365-8},
	url = {https://ieeexplore.ieee.org/document/10161246/},
	doi = {10.1109/ICRA48891.2023.10161246},
	abstract = {Safety is one of the main challenges when applying learning-based motion controllers to practical robotic systems, especially when the dynamics of the robots and their surrounding dynamic environments are unknown. This issue is further exacerbated when the learned information is unreliable and inaccurate. In this paper, we aim to enhance the safety of learning-enabled mobile robots in dynamic environments from the perspective of distributionally robust optimization (DRO) and the unscented transform (UT). Our method infers the unknown dynamics of both the robot and the environment by adopting Gaussian process regression with an uncertainty propagation scheme based on UT to improve prediction accuracy. This leads to a novel learning-based model predictive control (MPC) method in which state information about both the robot and the environment is propagated via UT. The proposed method uses DRO to proactively limit the risk of collisions or other unsafe events in the presence of learning errors. However, the distributionally robust risk constraint is intractable because it involves a separate infinite-dimensional optimization problem. To overcome this challenge, we exploit UT with modern DRO techniques to replace the risk constraint with its simple upper bound. The performance and the utility of our method are demonstrated through simulations in autonomous driving scenarios, showing its capability to enhance safety and computational efficiency.},
	language = {en},
	urldate = {2024-07-26},
	booktitle = {Proc. {IEEE} {Int}. {Conf}. {Robot}. {Automat}.},
	author = {Hakobyan, Astghik and Yang, Insoon},
	month = may,
	year = {2023},
	pages = {3225--3232},
}

@article{geiger_vision_2013,
	title = {Vision {Meets} {Robotics}: {The} {KITTI} {Dataset}},
	volume = {32},
	issn = {0278-3649, 1741-3176},
	shorttitle = {Vision meets robotics},
	url = {https://journals.sagepub.com/doi/10.1177/0278364913491297},
	doi = {10.1177/0278364913491297},
	abstract = {We present a novel dataset captured from a VW station wagon for use in mobile robotics and autonomous driving research. In total, we recorded 6 hours of trafﬁc scenarios at 10-100 Hz using a variety of sensor modalities such as highresolution color and grayscale stereo cameras, a Velodyne 3D laser scanner and a high-precision GPS/IMU inertial navigation system. The scenarios are diverse, capturing real-world trafﬁc situations and range from freeways over rural areas to innercity scenes with many static and dynamic objects. Our data is calibrated, synchronized and timestamped, and we provide the rectiﬁed and raw image sequences. Our dataset also contains object labels in the form of 3D tracklets and we provide online benchmarks for stereo, optical ﬂow, object detection and other tasks. This paper describes our recording platform, the data format and the utilities that we provide.},
	language = {en},
	number = {11},
	urldate = {2025-02-21},
	journal = {Int. J. Robot. Res.},
	author = {Geiger, A and Lenz, P and Stiller, C and Urtasun, R},
	month = sep,
	year = {2013},
	pages = {1231--1237},
}

@article{wang_signal-devices_2025,
	title = {Signal-{Devices} {Management} and {Data}-{Driven} {Evidential} {Constraints} {Based} {Robust} {Dispatch} {Strategy} of {Virtual} {Power} {Plant}},
	volume = {262},
	issn = {09574174},
	url = {https://linkinghub.elsevier.com/retrieve/pii/S0957417424024709},
	doi = {10.1016/j.eswa.2024.125603},
	abstract = {Ensuring the safety and reliability of energy systems is very important in power grid operation. However, inherent intricacies and uncertainties of modern-day power systems, such as the mismatch between signal frequency and equipment response speed and the stochasticity associated with renewable energy and load forecasts, create significant challenges for generation dispatch planning. This study proposes a robust optimization approach with constraints based on signal-devices management and data-driven evidential distribution constraints. The first stage of this approach is to decompose and recombine the net power demand according to the HilbertHuang transform and device capability. The signal-device management constraints are then established based on this. The second stage formulates an evidential constraint based on data-driven evidential distribution and chance constraints in a distributionally robust optimization framework that caters to the stochasticity associated with renewable energy and load forecasts. The proposed model is validated on a virtual power plant to assess the approach’s efficacy for different dispatching scenarios. Penalty-based sensitivity analysis provides further insights into the proposed method’s performance for varying levels of CO2 emissions. Simulation results demonstrate that system flexibility becomes increasingly crucial for maintaining system stability and security as the penetration of renewable energy grows. Compared with chance constraint, the proposed data-driven evidential constraint effectively enables the optimization framework to handle stochasticity after sacrificing 0.62\% more economic loss and 0.7\% more environmental loss. Excessively high penalty parameters for CO2 do not promote economic development, resulting in 21.08\% and 52.51\% more economic and environmental losses without obvious environmental protection.},
	language = {en},
	urldate = {2025-02-09},
	journal = {Expert Syst. Appl.},
	author = {Wang, Qianchao and Pan, Lei and Heistrene, Leena and Levron, Yoash},
	month = mar,
	year = {2025},
	pages = {125603},
}

@article{rockafellar_optimization_2000,
	title = {Optimization of {Conditional} {Value}-at-{Risk}},
	volume = {2},
	issn = {14651211},
	url = {http://www.risk.net/journal-of-risk/technical-paper/2161159/optimization-conditional-value-risk},
	doi = {10.21314/JOR.2000.038},
	language = {en},
	number = {3},
	urldate = {2025-02-02},
	journal = {J. Risk},
	author = {Rockafellar, R. Tyrrell and Uryasev, Stanislav},
	year = {2000},
	pages = {21--41},
}

@book{philip_methods_2007,
	edition = {2},
	title = {Methods of {Numerical} {Integration}},
	publisher = {Courier Corporation},
	author = {Philip, J. Davis and Philip, Rabinowitz},
	year = {2007},
}

@inproceedings{paat_medl-u_2024,
	title = {{MEDL}-{U}: {Uncertainty}-{Aware} {3D} {Automatic} {Annotation} {Based} on {Evidential} {Deep} {Learning}},
	shorttitle = {{MEDL}-{U}},
	url = {http://arxiv.org/abs/2309.09599},
	doi = {10.48550/arXiv.2309.09599},
	abstract = {Advancements in deep learning-based 3D object detection necessitate the availability of large-scale datasets. However, this requirement introduces the challenge of manual annotation, which is often both burdensome and timeconsuming. To tackle this issue, the literature has seen the emergence of several weakly supervised frameworks for 3D object detection which can automatically generate pseudo labels for unlabeled data. Nevertheless, these generated pseudo labels contain noise and are not as accurate as those labeled by humans. In this paper, we present the first approach that addresses the inherent ambiguities present in pseudo labels by introducing an Evidential Deep Learning (EDL) based uncertainty estimation framework. Specifically, we propose MEDL-U, an EDL framework based on MTrans, which not only generates pseudo labels but also quantifies the associated uncertainties. However, applying EDL to 3D object detection presents three primary challenges: (1) relatively lower pseudo label quality in comparison to other autolabelers; (2) excessively high evidential uncertainty estimates; and (3) lack of clear interpretability and effective utilization of uncertainties for downstream tasks. We tackle these issues through the introduction of an uncertainty-aware IoU-based loss, an evidenceaware multi-task loss, and the implementation of a postprocessing stage for uncertainty refinement. Our experimental results demonstrate that probabilistic detectors trained using the outputs of MEDL-U surpass deterministic detectors trained using outputs from previous 3D annotators on the KITTI val set for all difficulty levels. Moreover, MEDL-U achieves stateof-the-art results on the KITTI official test set compared to existing 3D automatic annotators.},
	language = {en},
	urldate = {2025-02-21},
	booktitle = {Proc. {IEEE} {Int}. {Conf}. {Robot}. {Automat}.},
	author = {Paat, Helbert and Lian, Qing and Yao, Weilong and Zhang, Tong},
	month = feb,
	year = {2024},
	keywords = {Computer Science - Computer Vision and Pattern Recognition, Computer Science - Machine Learning, Computer Science - Robotics},
	pages = {13976--13982},
}

@article{norton_calculating_2021,
	title = {Calculating {CVaR} and {bPOE} for {Common} {Probability} {Distributions} with {Application} to {Portfolio} {Optimization} and {Density} {Estimation}},
	volume = {299},
	issn = {0254-5330, 1572-9338},
	url = {http://link.springer.com/10.1007/s10479-019-03373-1},
	doi = {10.1007/s10479-019-03373-1},
	abstract = {Conditional value-at-risk (CVaR) and value-at-risk, also called the superquantile and quantile, are frequently used to characterize the tails of probability distributions and are popular measures of risk in applications where the distribution represents the magnitude of a potential loss. buffered probability of exceedance (bPOE) is a recently introduced characterization of the tail which is the inverse of CVaR, much like the CDF is the inverse of the quantile. These quantities can prove very useful as the basis for a variety of risk-averse parametric engineering approaches. Their use, however, is often made difﬁcult by the lack of well-known closed-form equations for calculating these quantities for commonly used probability distributions. In this paper, we derive formulas for the superquantile and bPOE for a variety of common univariate probability distributions. Besides providing a useful collection within a single reference, we use these formulas to incorporate the superquantile and bPOE into parametric procedures. In particular, we consider two: portfolio optimization and density estimation. First, when portfolio returns are assumed to follow particular distribution families, we show that ﬁnding the optimal portfolio via minimization of bPOE has advantages over superquantile minimization. We show that, given a ﬁxed threshold, a single portfolio is the minimal bPOE portfolio for an entire class of distributions simultaneously. Second, we apply our formulas to parametric density estimation and propose the method of superquantiles (MOS), a simple variation of the method of moments where moments are replaced by superquantiles at different conﬁdence levels. With the freedom to select various combinations of conﬁdence levels, MOS allows the user to focus the ﬁtting procedure on different portions of the distribution, such as the tail when ﬁtting heavy-tailed asymmetric data.},
	language = {en},
	number = {1-2},
	urldate = {2025-02-02},
	journal = {Ann. Oper. Res.},
	author = {Norton, Matthew and Khokhlov, Valentyn and Uryasev, Stan},
	month = apr,
	year = {2021},
	pages = {1281--1315},
}

@article{sun_toward_2024,
	title = {Toward {Ensuring} {Safety} for {Autonomous} {Driving} {Perception}: {Standardization} {Progress}, {Research} {Advances}, and {Perspectives}},
	volume = {25},
	copyright = {https://ieeexplore.ieee.org/Xplorehelp/downloads/license-information/IEEE.html},
	issn = {1524-9050, 1558-0016},
	shorttitle = {Toward {Ensuring} {Safety} for {Autonomous} {Driving} {Perception}},
	url = {https://ieeexplore.ieee.org/document/10285393/},
	doi = {10.1109/TITS.2023.3321309},
	abstract = {Perception systems play a crucial role in autonomous driving by reading the sensory data and providing meaningful interpretation of the operating environment for decision-making and planning. Guaranteeing a safe perception performance is the foundation for high-level autonomy, so that we can hand over the driving and monitoring tasks to the machine with ease. With the motivation of improving the perception systems’ safety, this survey analyzes and reviews the current achievements of safety-related standards and definitions, sensory modeling, and metrics for perception tasks in autonomous driving applications. Furthermore, it covers the generic categorization of potential failures and causal analysis in perception tasks, correlates the effect with the scenario modelling choices, and highlights major triumphs and noted limitations encountered by current research efforts. The new safety challenges laid out by the information exchange stage of the connected autonomous vehicle application have also been summarized. The open research questions and future directions are outlined to welcome researchers and practitioners to this exciting domain.},
	language = {en},
	number = {5},
	urldate = {2025-04-02},
	journal = {IEEE Trans. Intell. Transp. Syst.},
	author = {Sun, Chen and Zhang, Ruihe and Lu, Yukun and Cui, Yaodong and Deng, Zejian and Cao, Dongpu and Khajepour, Amir},
	month = may,
	year = {2024},
	pages = {3286--3304},
}

@book{richard_algorithms_2013,
	title = {Algorithms for {Minimization} {Without} {Derivatives}},
	publisher = {Courier Corporation},
	author = {Richard, P. Brent},
	year = {2013},
}

@inproceedings{amini_deep_2020,
	title = {Deep {Evidential} {Regression}},
	volume = {33},
	url = {http://arxiv.org/abs/1910.02600},
	abstract = {Deterministic neural networks (NNs) are increasingly being deployed in safety critical domains, where calibrated, robust, and efﬁcient measures of uncertainty are crucial. In this paper, we propose a novel method for training non-Bayesian NNs to estimate a continuous target as well as its associated evidence in order to learn both aleatoric and epistemic uncertainty. We accomplish this by placing evidential priors over the original Gaussian likelihood function and training the NN to infer the hyperparameters of the evidential distribution. We additionally impose priors during training such that the model is regularized when its predicted evidence is not aligned with the correct output. Our method does not rely on sampling during inference or on out-of-distribution (OOD) examples for training, thus enabling efﬁcient and scalable uncertainty learning. We demonstrate learning well-calibrated measures of uncertainty on various benchmarks, scaling to complex computer vision tasks, as well as robustness to adversarial and OOD test samples.},
	language = {en},
	urldate = {2024-07-01},
	booktitle = {Proc. {Adv}. {Neural} {Inf}. {Process}. {Syst}.},
	author = {Amini, Alexander and Schwarting, Wilko and Soleimany, Ava and Rus, Daniela},
	month = nov,
	year = {2020},
	keywords = {Computer Science - Machine Learning, Computer Science - Neural and Evolutionary Computing, Statistics - Machine Learning},
	pages = {14927--14937},
}

@inproceedings{dosovitskiy_carla_2017,
	title = {{CARLA}: {An} {Open} {Urban} {Driving} {Simulator}},
	abstract = {We introduce CARLA, an open-source simulator for autonomous driving research. CARLA has been developed from the ground up to support development, training, and validation of autonomous urban driving systems. In addition to open-source code and protocols, CARLA provides open digital assets (urban layouts, buildings, vehicles) that were created for this purpose and can be used freely. The simulation platform supports ﬂexible speciﬁcation of sensor suites and environmental conditions. We use CARLA to study the performance of three approaches to autonomous driving: a classic modular pipeline, an endto-end model trained via imitation learning, and an end-to-end model trained via reinforcement learning. The approaches are evaluated in controlled scenarios of increasing difﬁculty, and their performance is examined via metrics provided by CARLA, illustrating the platform’s utility for autonomous driving research.},
	language = {en},
	booktitle = {Proc. {Conf}. {Robot} {Learn}.},
	author = {Dosovitskiy, Alexey and Ros, German and Codevilla, Felipe and Lopez, Antonio and Koltun, Vladlen},
	year = {2017},
	pages = {1--16},
}

@article{delage_distributionally_2010,
	title = {Distributionally {Robust} {Optimization} {Under} {Moment} {Uncertainty} with {Application} to {Data}-{Driven} {Problems}},
	volume = {58},
	issn = {0030-364X, 1526-5463},
	url = {https://pubsonline.informs.org/doi/10.1287/opre.1090.0741},
	doi = {10.1287/opre.1090.0741},
	abstract = {Stochastic programming can effectively describe many decision-making problems in uncertain environments. Unfortunately, such programs are often computationally demanding to solve. In addition, their solution can be misleading when there is ambiguity in the choice of a distribution for the random parameters. In this paper, we propose a model that describes uncertainty in both the distribution form (discrete, Gaussian, exponential, etc.) and moments (mean and covariance matrix). We demonstrate that for a wide range of cost functions the associated distributionally robust (or min-max) stochastic program can be solved efficiently. Furthermore, by deriving a new confidence region for the mean and the covariance matrix of a random vector, we provide probabilistic arguments for using our model in problems that rely heavily on historical data. These arguments are confirmed in a practical example of portfolio selection, where our framework leads to better-performing policies on the “true” distribution underlying the daily returns of financial assets.},
	language = {en},
	number = {3},
	urldate = {2025-02-09},
	journal = {Oper. Res.},
	author = {Delage, Erick and Ye, Yinyu},
	month = jun,
	year = {2010},
	pages = {595--612},
}

@article{dellaporta_distributionally_2024,
	title = {Distributionally {Robust} {Optimisation} with {Bayesian} {Ambiguity} {Sets}},
	abstract = {Decision making under uncertainty is challenging since the data-generating process (DGP) is often unknown. Bayesian inference proceeds by estimating the DGP through posterior beliefs about the model’s parameters. However, minimising the expected risk under these posterior beliefs can lead to sub-optimal decisions due to model uncertainty or limited, noisy observations. To address this, we introduce Distributionally Robust Optimisation with Bayesian Ambiguity Sets (DROBAS) which hedges against uncertainty in the model by optimising the worstcase risk over a posterior-informed ambiguity set. We show that our method admits a closed-form dual representation for many exponential family members and showcase its improved out-of-sample robustness against existing Bayesian DRO methodology in the Newsvendor problem.},
	language = {en},
	journal = {arXiv:2409.03492},
	author = {Dellaporta, Charita and O’Hara, Patrick and Damoulas, Theodoros},
	year = {2024},
}

@article{ben-tal_robust_2013,
	title = {Robust {Solutions} of {Optimization} {Problems} {Affected} by {Uncertain} {Probabilities}},
	volume = {59},
	issn = {0025-1909, 1526-5501},
	url = {https://pubsonline.informs.org/doi/10.1287/mnsc.1120.1641},
	doi = {10.1287/mnsc.1120.1641},
	abstract = {In this paper we focus on robust linear optimization problems with uncertainty regions defined by ϕ-divergences (for example, chi-squared, Hellinger, Kullback–Leibler). We show how uncertainty regions based on ϕ-divergences arise in a natural way as confidence sets if the uncertain parameters contain elements of a probability vector. Such problems frequently occur in, for example, optimization problems in inventory control or finance that involve terms containing moments of random variables, expected utility, etc. We show that the robust counterpart of a linear optimization problem with ϕ-divergence uncertainty is tractable for most of the choices of ϕ typically considered in the literature. We extend the results to problems that are nonlinear in the optimization variables. Several applications, including an asset pricing example and a numerical multi-item newsvendor example, illustrate the relevance of the proposed approach.
            This paper was accepted by Gérard P. Cachon, optimization.},
	language = {en},
	number = {2},
	urldate = {2025-02-09},
	journal = {Manag. Sci.},
	author = {Ben-Tal, Aharon and Den Hertog, Dick and De Waegenaere, Anja and Melenberg, Bertrand and Rennen, Gijs},
	month = feb,
	year = {2013},
	pages = {341--357},
}

@article{wiesemann_distributionally_2014,
	title = {Distributionally {Robust} {Convex} {Optimization}},
	volume = {62},
	issn = {0030-364X, 1526-5463},
	url = {https://pubsonline.informs.org/doi/10.1287/opre.2014.1314},
	doi = {10.1287/opre.2014.1314},
	abstract = {Distributionally robust optimization is a paradigm for decision making under uncertainty where the uncertain problem data are governed by a probability distribution that is itself subject to uncertainty. The distribution is then assumed to belong to an ambiguity set comprising all distributions that are compatible with the decision maker’s prior information. In this paper, we propose a unifying framework for modeling and solving distributionally robust optimization problems. We introduce standardized ambiguity sets that contain all distributions with prescribed conic representable confidence sets and with mean values residing on an affine manifold. These ambiguity sets are highly expressive and encompass many ambiguity sets from the recent literature as special cases. They also allow us to characterize distributional families in terms of several classical and/or robust statistical indicators that have not yet been studied in the context of robust optimization. We determine conditions under which distributionally robust optimization problems based on our standardized ambiguity sets are computationally tractable. We also provide tractable conservative approximations for problems that violate these conditions.},
	language = {en},
	number = {6},
	urldate = {2025-02-26},
	journal = {Oper. Res.},
	author = {Wiesemann, Wolfram and Kuhn, Daniel and Sim, Melvyn},
	month = dec,
	year = {2014},
	pages = {1358--1376},
}

@inproceedings{majumdar_how_2017,
	title = {How {Should} a {Robot} {Assess} {Risk}? {Towards} an {Axiomatic} {Theory} of {Risk} in {Robotics}},
	shorttitle = {How {Should} a {Robot} {Assess} {Risk}?},
	abstract = {Endowing robots with the capability of assessing risk and making riskaware decisions is widely considered a key step toward ensuring safety for robots operating under uncertainty. But, how should a robot quantify risk? A natural and common approach is to consider the framework whereby costs are assigned to stochastic outcomes–an assignment captured by a cost random variable. Quantifying risk then corresponds to evaluating a risk metric, i.e., a mapping from the cost random variable to a real number. Yet, the question of what constitutes a “good” risk metric has received little attention within the robotics community. The goal of this paper is to explore and partially address this question by advocating axioms that risk metrics in robotics applications should satisfy in order to be employed as rational assessments of risk. We discuss general representation theorems that precisely characterize the class of metrics that satisfy these axioms (referred to as distortion risk metrics), and provide instantiations that can be used in applications. We further discuss pitfalls of commonly used risk metrics in robotics, and discuss additional properties that one must consider in sequential decision making tasks. Our hope is that the ideas presented here will lead to a foundational framework for quantifying risk (and hence safety) in robotics applications.},
	language = {en},
	booktitle = {Proc. {Int}. {Symp}. {Robot}. {Res}.},
	author = {Majumdar, Anirudha and Pavone, Marco},
	month = nov,
	year = {2017},
	keywords = {Computer Science - Artificial Intelligence, Computer Science - Robotics, Computer Science - Systems and Control, Mathematics - Optimization and Control},
}

@article{ren_chance-constrained_2023,
	title = {Chance-{Constrained} {Trajectory} {Planning} {With} {Multimodal} {Environmental} {Uncertainty}},
	volume = {7},
	copyright = {https://ieeexplore.ieee.org/Xplorehelp/downloads/license-information/IEEE.html},
	issn = {2475-1456},
	url = {https://ieeexplore.ieee.org/document/9805817/},
	doi = {10.1109/LCSYS.2022.3186269},
	abstract = {We tackle safe trajectory planning under Gaussian mixture model (GMM) uncertainty. Speciﬁcally, we use a GMM to model the multimodal behaviors of obstacles’ uncertain states. Then, we develop a mixed-integer conic approximation to the chance-constrained trajectory planning problem with deterministic linear systems and polyhedral obstacles. When the GMM moments are estimated via ﬁnite samples, we develop a tight concentration bound to ensure the chance constraint with a desired conﬁdence. Moreover, to limit the amount of constraint violation, we develop a Conditional Value-at-Risk (CVaR) approach corresponding to the chance constraints and derive a tractable approximation for known and estimated GMM moments. We verify our methods with state-of-the-art trajectory prediction algorithms and autonomous driving datasets.},
	language = {en},
	urldate = {2025-02-02},
	journal = {IEEE Contr. Syst. Lett.},
	author = {Ren, Kai and Ahn, Heejin and Kamgarpour, Maryam},
	year = {2023},
	pages = {13--18},
}

@misc{glenn_ultralytics_2023,
	title = {Ultralytics {YOLOv8}},
	url = {https://github.com/ultralytics/ultralytics},
	author = {Glenn, Jocher and Ayush, Chaurasia and Jing, Qiu},
	year = {2023},
}

@article{nemirovski_convex_2007,
	title = {Convex {Approximations} of {Chance} {Constrained} {Programs}},
	volume = {17},
	issn = {1052-6234, 1095-7189},
	url = {http://epubs.siam.org/doi/10.1137/050622328},
	doi = {10.1137/050622328},
	abstract = {We consider a chance constrained problem, where one seeks to minimize a convex objective over solutions satisfying, with a given close to one probability, a system of randomly perturbed convex constraints. This problem may happen to be computationally intractable; our goal is to build its computationally tractable approximation, i.e., an eﬃciently solvable deterministic optimization program with the feasible set contained in the chance constrained problem. We construct a general class of such convex conservative approximations of the corresponding chance constrained problem. Moreover, under the assumptions that the constraints are aﬃne in the perturbations and the entries in the perturbation vector are independent-of-each-other random variables, we build a large deviation-type approximation, referred to as “Bernstein approximation,” of the chance constrained problem. This approximation is convex and eﬃciently solvable. We propose a simulation-based scheme for bounding the optimal value in the chance constrained problem and report numerical experiments aimed at comparing the Bernstein and well-known scenario approximation approaches. Finally, we extend our construction to the case of ambiguous chance constrained problems, where the random perturbations are independent with the collection of distributions known to belong to a given convex compact set rather than to be known exactly, while the chance constraint should be satisﬁed for every distribution given by this set.},
	language = {en},
	number = {4},
	urldate = {2025-02-02},
	journal = {SIAM J. Optim.},
	author = {Nemirovski, Arkadi and Shapiro, Alexander},
	month = jan,
	year = {2007},
	pages = {969--996},
}

@inproceedings{kendall_what_2017,
	title = {What {Uncertainties} {Do} {We} {Need} in {Bayesian} {Deep} {Learning} for {Computer} {Vision}?},
	volume = {30},
	abstract = {There are two major types of uncertainty one can model. Aleatoric uncertainty captures noise inherent in the observations. On the other hand, epistemic uncertainty accounts for uncertainty in the model – uncertainty which can be explained away given enough data. Traditionally it has been difﬁcult to model epistemic uncertainty in computer vision, but with new Bayesian deep learning tools this is now possible. We study the beneﬁts of modeling epistemic vs. aleatoric uncertainty in Bayesian deep learning models for vision tasks. For this we present a Bayesian deep learning framework combining input-dependent aleatoric uncertainty together with epistemic uncertainty. We study models under the framework with per-pixel semantic segmentation and depth regression tasks. Further, our explicit uncertainty formulation leads to new loss functions for these tasks, which can be interpreted as learned attenuation. This makes the loss more robust to noisy data, also giving new state-of-the-art results on segmentation and depth regression benchmarks.},
	language = {en},
	booktitle = {Proc. {Adv}. {Neural} {Inf}. {Process}. {Syst}.},
	author = {Kendall, Alex and Gal, Yarin},
	year = {2017},
	pages = {5580--5590},
}

@article{yurtsever_survey_2020,
	title = {A {Survey} of {Autonomous} {Driving}: {Common} {Practices} and {Emerging} {Technologies}},
	volume = {8},
	copyright = {https://creativecommons.org/licenses/by/4.0/legalcode},
	issn = {2169-3536},
	shorttitle = {A {Survey} of {Autonomous} {Driving}},
	url = {https://ieeexplore.ieee.org/document/9046805/},
	doi = {10.1109/ACCESS.2020.2983149},
	abstract = {Automated driving systems (ADSs) promise a safe, comfortable and efﬁcient driving experience. However, fatalities involving vehicles equipped with ADSs are on the rise. The full potential of ADSs cannot be realized unless the robustness of state-of-the-art is improved further. This paper discusses unsolved problems and surveys the technical aspect of automated driving. Studies regarding present challenges, high-level system architectures, emerging methodologies and core functions including localization, mapping, perception, planning, and human machine interfaces, were thoroughly reviewed. Furthermore, many stateof-the-art algorithms were implemented and compared on our own platform in a real-world driving setting. The paper concludes with an overview of available datasets and tools for ADS development.},
	language = {en},
	urldate = {2025-02-09},
	journal = {IEEE Access},
	author = {Yurtsever, Ekim and Lambert, Jacob and Carballo, Alexander and Takeda, Kazuya},
	year = {2020},
	pages = {58443--58469},
}

\ifarxiv
\section*{APPENDIX}
\subsection{Properties of the Value-at-Risk}
\begin{lemma} \label{lemma:VaR}
\textnormal{(Properties of the Value-at-Risk).}
Given a random variable $X\sim\mathbb{P}=\mathcal{N}(\mu, \sigma^2)$ and a threshold $\epsilon\in [0,1)$, the following holds. 
\begin{enumerate}[label=(\alph*)]
    \item $\text{VaR}_{\epsilon}^\mathbb{P}[-|X|^2]=-\text{VaR}_{\epsilon}^\mathbb{P}[-|X|]^2$
    \item For $Y=-|X|$ with a fixed $\sigma^2$, 
    \begin{equation*}
        \begin{aligned}
            \frac{d \text{VaR}_{\epsilon}^\mathbb{P}[Y]}{d\mu}=\frac{\phi(\frac{\text{VaR}_{\epsilon}^\mathbb{P}[Y]-\mu}{\sigma})-\phi(\frac{-\text{VaR}_{\epsilon}^\mathbb{P}[Y]-\mu}{\sigma})}{\phi(\frac{\text{VaR}_{\epsilon}^\mathbb{P}[Y]-\mu}{\sigma})+\phi(\frac{-\text{VaR}_{\epsilon}^\mathbb{P}[Y]-\mu}{\sigma})},
        \end{aligned}
    \end{equation*}
    where $\phi$ is the probability density function of the standard normal distribution.
    \item If $\mu=0$, then $\text{VaR}_\epsilon^\mathbb{P} [-|X|]=\sqrt{2}\sigma\cdot erf^{-1}(\epsilon-1)$
    \item $\text{VaR}_{\epsilon}^\mathbb{P}[X]> \text{VaR}_{\epsilon}^\mathbb{P}[-|X|]$
\end{enumerate}
\end{lemma}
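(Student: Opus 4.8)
The plan is to read $\text{VaR}_\epsilon^\mathbb{P}[\cdot]$ everywhere as the $\epsilon$-quantile of the loss in question and to exploit how quantiles transform under deterministic maps of $X$. Writing $\Phi$ and $\phi$ for the standard normal CDF and pdf, the single computation that drives all four parts is the CDF of $-|X|$: for $z\le 0$ I would record $F(z;\mu):=\mathbb{P}(-|X|\le z)=\mathbb{P}(X\ge -z)+\mathbb{P}(X\le z)=\Phi\big(\tfrac{z-\mu}{\sigma}\big)+1-\Phi\big(\tfrac{-z-\mu}{\sigma}\big)$. I would also note the pointwise relation $-|X|\le X$, with equality exactly when $X\le 0$, which powers part (d). For (a), set $Z:=-|X|\le 0$ and $g(z):=-z^2$; on $(-\infty,0]$ the map $g$ is strictly increasing and continuous and $-|X|^2=g(Z)$, so since quantiles commute with strictly increasing continuous transformations, $\text{VaR}_\epsilon^\mathbb{P}[-|X|^2]=g\big(\text{VaR}_\epsilon^\mathbb{P}[-|X|]\big)=-\text{VaR}_\epsilon^\mathbb{P}[-|X|]^2$. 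For (c), read in the symmetric case $\mu=0$ that matches the companion statement in Lemma~\ref{lemma:CVaR}(c), $|X|$ is half-normal, so from $F(z;0)=2\big(1-\Phi(-z/\sigma)\big)=\epsilon$ I would solve $-z/\sigma=\Phi^{-1}(1-\epsilon/2)$ and convert with $\Phi^{-1}(p)=\sqrt2\,erf^{-1}(2p-1)$ together with the oddness of $erf^{-1}$ to reach $\text{VaR}_\epsilon^\mathbb{P}[-|X|]=\sqrt2\,\sigma\,erf^{-1}(\epsilon-1)$.

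For (b), let $v(\mu):=\text{VaR}_\epsilon^\mathbb{P}[Y]$ with $Y=-|X|$, defined implicitly by $F(v(\mu);\mu)=\epsilon$. Differentiating $F$ gives $\partial_z F=\tfrac1\sigma\big[\phi(\tfrac{z-\mu}{\sigma})+\phi(\tfrac{-z-\mu}{\sigma})\big]$ and $\partial_\mu F=\tfrac1\sigma\big[\phi(\tfrac{-z-\mu}{\sigma})-\phi(\tfrac{z-\mu}{\sigma})\big]$. Since $\partial_z F>0$, the implicit function theorem yields both differentiability of $v$ and $v'(\mu)=-\partial_\mu F/\partial_z F$ evaluated at $z=v$, and the $\tfrac1\sigma$ factors cancel to give exactly the stated quotient. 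The clean cancellation of signs here is what I would track most carefully.

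For (d), the pointwise bound $-|X|\le X$ gives the weak inequality $\text{VaR}_\epsilon^\mathbb{P}[-|X|]\le \text{VaR}_\epsilon^\mathbb{P}[X]$ by monotonicity of the quantile. To upgrade to strict, write $q:=\text{VaR}_\epsilon^\mathbb{P}[-|X|]\le 0$ and use $F(q;\mu)=\mathbb{P}(X\le q)+\mathbb{P}(X\ge -q)=\epsilon$. Because $X$ is Gaussian with full support, $\mathbb{P}(X\ge -q)>0$, hence $\mathbb{P}(X\le q)=\epsilon-\mathbb{P}(X\ge -q)<\epsilon$; since the Gaussian CDF is strictly increasing, this forces $q<\text{VaR}_\epsilon^\mathbb{P}[X]$, which is the claimed strict inequality.

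I expect the main obstacle to be the regularity bookkeeping in part (b): one must justify that $v(\mu)$ is genuinely differentiable before writing $v'$, and the positivity $\partial_z F>0$ is precisely the nondegeneracy condition that licenses the implicit function theorem and produces the stated formula. A secondary point to flag is that the formula in (c) as written is $\mu$-independent and is therefore understood in the symmetric case $\mu=0$, consistent with Lemma~\ref{lemma:CVaR}(c); indeed part (b) shows the quantile does move with $\mu$ in general, so the half-normal reduction is the correct reading.
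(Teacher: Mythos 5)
Your proposal is correct, and its engine is the same as the paper's: everything is driven by the folded CDF $\mathbb{P}(-|X|\le z)=\Phi\bigl(\tfrac{z-\mu}{\sigma}\bigr)+1-\Phi\bigl(\tfrac{-z-\mu}{\sigma}\bigr)$, part (b) by implicit differentiation of $F(v(\mu);\mu)=\epsilon$, and part (c) by the half-normal quantile. The differences are in packaging, and they mostly work in your favor. In (a) you invoke equivariance of quantiles under the strictly increasing map $g(z)=-z^2$ on $(-\infty,0]$, where the paper re-derives the same fact through a chain of probability equivalences; both are fine. In (b) your appeal to the implicit function theorem with $\partial_z F>0$ supplies the differentiability of $\mu\mapsto\text{VaR}_\epsilon^{\mathbb{P}}[Y]$ that the paper simply assumes before differentiating, so your version is slightly more rigorous while landing on the identical quotient. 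In (d) you replace the paper's integral decomposition $\int_k^\infty f_X\,dx=\int_k^0 f_{-|X|}\,dx+\int_{-k}^\infty f_X\,dx$ (followed by folding and comparing $\int_{k'}^{k}f_{-|X|}\,dx>0$) with the more economical observation that $\mathbb{P}(X\le q)=\epsilon-\mathbb{P}(X\ge -q)<\epsilon$ for $q=\text{VaR}_\epsilon^{\mathbb{P}}[-|X|]$, plus strict monotonicity of the Gaussian CDF; the key fact in both is the strictly positive upper-tail mass, but your route avoids the bookkeeping. Finally, your reading of (c) as the $\mu=0$ case is the right one: the paper's folded density $f_Y(y)=\tfrac{\sqrt{2}}{\sigma\sqrt{\pi}}\exp(-y^2/2\sigma^2)$ is itself only valid at $\mu=0$, so the statement, whose right-hand side is $\mu$-independent, must be understood that way, consistent with Lemma~\ref{lemma:CVaR}(c); it is worth noting that the paper leaves this hypothesis implicit, whereas you state it.
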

\begin{proof}
(a)
\begin{equation*} 
    \begin{aligned}
        &P[-|X|\leq\text{VaR}^\mathbb{P}_{\epsilon}[-|X|]]=\epsilon, \\
        &P[-|X|^2\leq\text{VaR}^\mathbb{P}_{\epsilon}[-|X|^2]]=\epsilon \\
        \Leftrightarrow &P[|X|\geq\sqrt{-\text{VaR}^\mathbb{P}_{\epsilon}[-|X|^2]}]=\epsilon\\
         \Leftrightarrow &P[-|X|\leq-\sqrt{-\text{VaR}^\mathbb{P}_{\epsilon}[-|X|^2]}]=\epsilon.
    \end{aligned}
\end{equation*}
\begin{equation*}
    \therefore \text{VaR}_{\epsilon}^\mathbb{P}[-|X|^2]=-\text{VaR}_{\epsilon}^\mathbb{P}[-|X|]^2
\end{equation*}

(b) 
Let $F_Y(y)$ and $f_Y(y)$ be the cumulative probability distribution and the probability density function of a random variable $Y$, respectively. Let us similarly define $F_X(x)$ and $f_X(x)$ for a random variable $X$. Also, let $\Phi$ be the cumulative distribution function of the standard normal distribution.
For simplicity, we denote $\text{Var}_\epsilon^\mathbb{P}[Y]$ by $k(\mu),$ which is also a function of $\sigma$ but we consider it as a constant.

By definition of VaR, we have 
\begin{equation*}
    \begin{aligned}
    &{\epsilon}=F_Y(k(\mu))=\int_{-\infty}^{k(\mu)}f_Y(y)dy \\
     &=\int_{-\infty}^{k(\mu)}(f_X(x)+f_X(-x))dx \\
     &=\int_{-\infty}^{k(\mu)}f_X(x)dx+\int_{-\infty}^{k(\mu)}f_X(-x)dx. \\
    \end{aligned}
\end{equation*}
Let $t=-x$. Then,
\begin{equation*}
    \begin{aligned}
    &F_Y(k(\mu))=\int_{-\infty}^{k(\mu)}f_X(x)dx+\int_{\infty}^{-k(\mu)}f_X(t)(-dt) \\
    &=\int_{-\infty}^{k(\mu)}f_X(x)dx-\int_{\infty}^{-k(\mu)}f_X(x)dx \\
    &=\Phi\left(\frac{k(\mu)-\mu}{\sigma}\right)-\Phi\left(\frac{-k(\mu)-\mu}{\sigma}\right)+1. \\
    \end{aligned}
\end{equation*}
Because ${\epsilon}=F_Y(k(\mu))$, we have $\frac{d F_Y(k(\mu))}{d\mu}=0$. This leads to
\begin{multline*}
    \phi\left(\frac{k(\mu)-\mu}{\sigma}\right)\cdot\left(\frac{d k(\mu)}{d\mu}-1\right)\frac{1}{\sigma} \\
    -\phi\left(\frac{-k(\mu)-\mu}{\sigma}\right)\cdot\left(-\frac{d k(\mu)}{d\mu}-1\right)\frac{1}{\sigma}=0.
\end{multline*}
Then
\begin{equation*}
    \begin{aligned}
    &\frac{dk(\mu)}{d\mu}=\frac{\phi\left(\frac{k(\mu)-\mu}{\sigma}\right)-\phi\left(\frac{-k(\mu)-\mu}{\sigma}\right)}{\phi\left(\frac{k(\mu)-\mu}{\sigma}\right)+\phi\left(\frac{-k(\mu)-\mu}{\sigma}\right)}. \\
    \end{aligned}
\end{equation*}
This concludes the proof.

(c) Let $-|X|=Y$. Then the probability density function of $Y$ is
\begin{equation*}
    \begin{aligned}
        f_Y(y)=f_X(x)+f_X(-x)=\frac{\sqrt{2}}{\sigma\sqrt{\pi}}exp(-\frac{y^2}{2\sigma^2}), y\leq 0.
    \end{aligned}
\end{equation*}
And the cumulative distribution function of $Y$ is 
\begin{equation*}
    \begin{aligned}
        F_Y(y)=\int_{-\infty}^y f_Y(t)dt=\int_{-\infty}^y\frac{\sqrt{2}}{\sigma\sqrt{\pi}}exp(-\frac{t^2}{2\sigma^2})dt.
    \end{aligned}
\end{equation*}
Meanwhile,
\begin{equation*}
    \begin{aligned}
        F_Y(0)=\int_{-\infty}^0 f_Y(t)dt=1
    \end{aligned}
\end{equation*}
because $Y$ is defined in $(-\infty,0]$. Then
\begin{equation*}
    \begin{aligned}
        &F_Y(0)=\int_{-\infty}^y f_Y(t)dt + \int_y^0 f_Y(t)dt \\
        &=\int_{-\infty}^y f_Y(t)dt - \int_0^y f_Y(t)dt.
    \end{aligned}
\end{equation*}
Therefore, 
\begin{equation*}
    \begin{aligned}
        \int_{-\infty}^y f_Y(t)dt &= 1+\int_0^y f_Y(t)dt\\
        &=1+\int_{0}^y\frac{\sqrt{2}}{\sigma\sqrt{\pi}}exp(-\frac{t^2}{2\sigma^2})dt.
    \end{aligned}
\end{equation*}
Let ${t}/{\sqrt{2}\sigma}=k.$ Then
\begin{equation*}
    \begin{aligned}
        & F_Y(y)=1+\frac{\sqrt{2}}{\sigma\sqrt{\pi}}\int_{0}^{{y}/{\sqrt{2}\sigma}}exp(-k^2)\cdot \sqrt{2}\sigma dk \\
        & =1+\frac{2}{\sqrt{\pi}}\int_{0}^{{y}/{\sqrt{2}\sigma}}exp(-k^2)dk \\
        & =1+erf({y}/{\sqrt{2}{\sigma}}).
    \end{aligned}
\end{equation*}
$\text{VaR}_{\epsilon}^\mathbb{P}[Y]$ is the value of $y$ when $F_Y(y)=\epsilon$. Therefore,
\begin{equation*}
    \begin{aligned}
        \text{VaR}_\epsilon^\mathbb{P}[Y]=\sqrt{2}\sigma\cdot erf^{-1}(\epsilon-1).
    \end{aligned}
\end{equation*}

(d) Let $\text{VaR}_{\epsilon}^\mathbb{P}[X]=k, \text{VaR}_{\epsilon}^\mathbb{P}[-|X|]=k'$.
By definition, 
\begin{equation*}
    \int^{\infty}_k f_X(x) dx={1-\epsilon}=\int^0_{k'} f_{-|X|}(x)dx.
\end{equation*}
Also,
\begin{equation*}
    \begin{aligned}
        &\int^{\infty}_k f_X(x)dx \\
        &=\int_k^0f_X(x)dx + \int_0^{-k} f_X(x)dx + \int^{\infty}_{-k}f_X(x)dx \\
        &=\int_k^0f_X(x)dx + \int_k^{0} f_X(-x)dx + \int^{\infty}_{-k}f_X(x)dx \\
        &=\int_k^0f_{-|X|}(x)dx + \int^{\infty}_{-k}f_X(x)dx.
    \end{aligned}
\end{equation*}
Therefore,
\begin{equation*}
    \begin{aligned}
        &\int^\infty_{-k} f_X(x)dx >0\\
        \implies & \int_{k'}^0f_{-|X|}(x)dx - \int_{k}^0f_{-|X|}(x)dx \\
        &=\int_{k'}^kf_{-|X|}(x)dx>0.       
    \end{aligned}
\end{equation*}
Therefore, $\text{VaR}_{\epsilon}^\mathbb{P}[X]>\text{VaR}_{\epsilon}^\mathbb{P}[-|X|].$
\end{proof}
\subsection{Properties of the Conditional Value-at-Risk}

\setcounter{lemma}{0}
\begin{lemma} \label{lemma}
\textnormal{(Properties of the Conditional Value-at-Risk).}
Given a random variable $X\sim\mathbb{P}=\mathcal{N}(\mu, \sigma^2)$ and a threshold $\epsilon\in [0,1)$, the following holds. 
\begin{enumerate}[label=(\alph*)]
    \item $\text{CVaR}_{\epsilon}^\mathbb{P}[-|X|^2]\leq -\text{CVaR}_{\epsilon}^\mathbb{P}[-|X|]^2$
    \item For a fixed $\sigma^2$, $\text{CVaR}_{\epsilon}^\mathbb{P}[-|X|]$ is monotonically decreasing with respect to $|\mu|$.
    \item If $\mu=0$, then $\text{CVaR}_\epsilon^\mathbb{P}[-|X|]=\kappa\cdot\sigma$, \\
    where $\kappa=\frac{1}{1-\epsilon}\sqrt{{2}/{\pi}}[exp(-[erf^{-1}(\epsilon-1)]^2)-1]$.
    \item If $\text{VaR}_\epsilon^\mathbb{P}[X]> 0 \land \mu> 0$, or $\text{VaR}_\epsilon^\mathbb{P}[X]< 0 \land \mu< 0$, then $-|\mu|+\delta\cdot\sigma> \text{CVaR}_\epsilon^\mathbb{P}[-|X|]$.
\end{enumerate}
\end{lemma}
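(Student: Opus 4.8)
The plan is to route every claim through the Acerbi integral representation of CVaR for a continuous loss $L$,
\[
\text{CVaR}_\epsilon^\mathbb{P}[L]=\frac{1}{1-\epsilon}\int_\epsilon^1 \text{VaR}_u^\mathbb{P}[L]\,du,
\]
so that each statement about CVaR collapses to a quantile fact already available in Lemma~\ref{lemma:VaR}. Every loss here ($-|X|$ and $-|X|^2$) is a continuous transform of a Gaussian, so the representation applies. For part~(a) I would substitute $L=-|X|^2$ and invoke Lemma~\ref{lemma:VaR}(a) to obtain $\text{CVaR}_\epsilon^\mathbb{P}[-|X|^2]=-\frac{1}{1-\epsilon}\int_\epsilon^1 \text{VaR}_u^\mathbb{P}[-|X|]^2\,du$. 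Treating $du/(1-\epsilon)$ as a probability measure on $[\epsilon,1]$ and applying Jensen's inequality to the convex map $t\mapsto t^2$ bounds the integral below by $\text{CVaR}_\epsilon^\mathbb{P}[-|X|]^2$; negating yields the claim.

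For part~(b) I would differentiate the representation in $\mu$ under the integral sign, using the formula of Lemma~\ref{lemma:VaR}(b) for $d\,\text{VaR}_u^\mathbb{P}[-|X|]/d\mu$, a ratio of standard-normal densities with numerator $\phi(\tfrac{k-\mu}{\sigma})-\phi(\tfrac{-k-\mu}{\sigma})$, where $k:=\text{VaR}_u^\mathbb{P}[-|X|]\le 0$. For $\mu>0$ the two arguments have magnitudes $\mu+|k|$ and $\big||k|-\mu\big|$; since $\phi$ is even and decreasing in $|\cdot|$ and $\mu+|k|\ge\big||k|-\mu\big|$, the numerator is nonpositive, so each integrand is nonincreasing in $\mu$ and hence so is the CVaR. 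Because the law of $|X|$ depends on $\mu$ only through $|\mu|$, reflecting through $\mu\mapsto-\mu$ upgrades this to monotone decrease in $|\mu|$. For part~(c), at $\mu=0$ the variable $-|X|$ has the half-normal density $f(y)=\tfrac{\sqrt2}{\sigma\sqrt\pi}\exp(-y^2/2\sigma^2)$ on $(-\infty,0]$ and $\text{VaR}_\epsilon^\mathbb{P}[-|X|]=\sqrt2\,\sigma\,erf^{-1}(\epsilon-1)=:v$, both from Lemma~\ref{lemma:VaR}(c); computing $\text{CVaR}_\epsilon^\mathbb{P}[-|X|]=\tfrac{1}{1-\epsilon}\int_v^0 y\,f(y)\,dy$ and substituting $w=y^2/2\sigma^2$ evaluates the integral to $\tfrac{1}{1-\epsilon}\sqrt{2/\pi}\,\sigma\,[\exp(-[erf^{-1}(\epsilon-1)]^2)-1]=\kappa\sigma$.

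Part~(d) is where I expect the real work, and the strict inequality is the obstacle. The crux is recognizing the bound as a Gaussian CVaR: from $\text{CVaR}_\epsilon^\mathbb{P}[\mathcal{N}(m,s^2)]=m+s\,\phi(\Phi^{-1}(\epsilon))/(1-\epsilon)$ and $\Phi^{-1}(\epsilon)=\sqrt2\,erf^{-1}(2\epsilon-1)$, one checks $\phi(\Phi^{-1}(\epsilon))/(1-\epsilon)=\delta$, so $-|\mu|+\delta\sigma=\text{CVaR}_\epsilon^\mathbb{P}[-\mathrm{sgn}(\mu)X]$. Since $-|X|=\min(X,-X)\le-\mathrm{sgn}(\mu)X$ pointwise, monotonicity of CVaR gives the non-strict bound immediately. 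To make it strict I would use the dual form $\text{CVaR}_\epsilon^\mathbb{P}[L]=\min_z\{z+\tfrac{1}{1-\epsilon}\mathbb{E}[(L-z)^+]\}$ evaluated at the minimizer $z^\star=\text{VaR}_\epsilon^\mathbb{P}[-\mathrm{sgn}(\mu)X]$ of the dominating variable: the gap is at least $\tfrac{1}{1-\epsilon}\mathbb{E}\big[(-\mathrm{sgn}(\mu)X-z^\star)^+-(-|X|-z^\star)^+\big]$, whose integrand is strictly positive exactly on $\{\mathrm{sgn}(\mu)X<0\}$ intersected with the tail $\{-\mathrm{sgn}(\mu)X>z^\star\}$. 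The sign hypotheses $\text{VaR}_\epsilon^\mathbb{P}[X]>0\wedge\mu>0$ (and symmetrically their negatives) fix $-\mathrm{sgn}(\mu)X$ as the correct dominating Gaussian and place this event in the averaging tail with positive probability, forcing a strict gap; I would treat $\mu>0$ explicitly and recover $\mu<0$ by the reflection $X\mapsto-X$. The difficulty is this bookkeeping—aligning the domination, the optimizer, and the positive-probability tail event—rather than any single hard calculation.
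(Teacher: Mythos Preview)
Your proposal is correct, and for parts~(a) and~(c) it is essentially the paper's argument in a different wrapper: the paper uses the tail--conditional--expectation form of CVaR together with Lemma~\ref{lemma:VaR}(a) and Jensen's inequality in~(a), and the same half--normal integral in~(c). The substantive divergence is in~(b) and~(d). For~(b), the paper differentiates the tail expectation $\frac{1}{1-\epsilon}\int_{k(\mu)}^{0} y f_Y(y)\,dy$ directly via a Leibniz--rule computation, after which the boundary terms carrying $k'(\mu)$ cancel exactly by the formula of Lemma~\ref{lemma:VaR}(b), leaving a difference of two Gaussian probability masses whose sign is read off from the shape of $\phi$; your route instead pushes the derivative through the Acerbi integral and invokes the sign of $d\,\text{VaR}_u/d\mu$ level by level, which is cleaner and avoids the cancellation bookkeeping but requires differentiating under the integral. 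For~(d), the paper proves $\text{CVaR}_\epsilon^\mathbb{P}[X]>\text{CVaR}_\epsilon^\mathbb{P}[-|X|]$ (in the $\mu<0$ case, then reflects) by truncating $\int_k^\infty x f_X\,dx$ to $\int_k^{-k}$, symmetrizing, and finally enlarging the domain to $[\text{VaR}_\epsilon^\mathbb{P}[-|X|],0]$ via Lemma~\ref{lemma:VaR}(d); the hypothesis $\text{VaR}_\epsilon^\mathbb{P}[X]<0$ is what makes the truncation step strict. Your argument---identify $-|\mu|+\delta\sigma$ as $\text{CVaR}_\epsilon^\mathbb{P}[-\mathrm{sgn}(\mu)X]$, use pointwise domination for the non--strict bound, then the Rockafellar--Uryasev dual at the dominating variable's optimizer for strictness---is more conceptual and in fact shows the VaR sign hypothesis is not really needed (only $\mu\neq 0$ matters, since the Gaussian has full support so the strict--gap event $\{\mathrm{sgn}(\mu)X<0\}\cap\{-\mathrm{sgn}(\mu)X>z^\star\}$ always has positive probability). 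What the paper's route buys is self--containment: it never appeals to CVaR monotonicity or the Acerbi representation as external facts, only to elementary integration and the VaR comparison already proved.
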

\begin{proof}
    
(a)
\begin{align}
    \text{CVaR}_{\epsilon}^\mathbb{P}[-|X|^2]&=\mathbb{E}[-|X|^2:-|X|^2\geq\text{VaR}_{\epsilon}^\mathbb{P}[-|X|^2]] \\ 
    &=\mathbb{E}[-|X|^2:-|X|^2\geq -\text{VaR}_{\epsilon}^\mathbb{P}[-|X|]^2]  \label{eq:18} \\
    &=\mathbb{E}[-|X|^2:|X|\leq-\text{VaR}_{\epsilon}^\mathbb{P}[-|X|]] \\
    &=-\mathbb{E}[|X|^2:|X|\leq-\text{VaR}_{\epsilon}^\mathbb{P}[-|X|]] \\
    &\leq -\mathbb{E}[-|X|:|X|\leq-\text{VaR}_{\epsilon}^\mathbb{P}[-|X|]]^2  \label{eq:20} \\
    &= -\mathbb{E}[-|X|:-|X|\geq\text{VaR}_{\epsilon}^\mathbb{P}[-|X|]]^2 \\
    &=-\text{CVaR}_{\epsilon}^\mathbb{P}[-|X|]^2.
\end{align}
Equation \eqref{eq:18} holds by Lemma \ref{lemma:VaR} (a), and inequality \eqref{eq:20} holds by Jensen's inequality.

(b) With the same notations used in the proof of Lemma~\ref{lemma:VaR} (b), we have
\begin{equation*}
    \begin{aligned}
        \text{CVaR}_{\epsilon}^\mathbb{P}[Y]&=\mathbb{E}[Y: Y\geq k(\mu)] =\frac{1}{{1-\epsilon}}\int_{k(\mu)}^0yf_Y(y)dy \\
        &=\frac{1}{{1-\epsilon}}\int_{k(\mu)}^0x(f_X(x)+f_X(-x))dx \\
        &=\frac{1}{{1-\epsilon}}\left[\int_{k(\mu)}^0 xf_X(x)dx+\int_{-k(\mu)}^0tf_X(t)dt\right] \\
        &=\frac{1}{{1-\epsilon}}\left[\int_{k(\mu)}^0xf_X(x)dx-\int^{-k(\mu)}_0xf_X(x)dx\right]. 
    \end{aligned}
\end{equation*}
Let $a(\mu), b(\mu)$ be functions with respect to $\mu$, and standardized random variable $Z=(X-\mu)/\sigma$. Then,
\begin{equation*}
    \begin{aligned}
    \int_{a(\mu)}^{b(\mu)}xf_X(x)dx&=\int_{\frac{a(\mu)-\mu}{\sigma}}^{\frac{b(\mu)-\mu}{\sigma}}(\mu+\sigma\cdot z) \frac{1}{\sigma}\phi(z) \sigma dz \\
        &=\mu\int_{\frac{a(\mu)-\mu}{\sigma}}^{\frac{b(\mu)-\mu}{\sigma}}\phi(z)dz + \sigma\int_{\frac{a(\mu)-\mu}{\sigma}}^{\frac{b(\mu)-\mu}{\sigma}}z\phi(z)dz. \\
    \end{aligned}
\end{equation*}
With $z_a(\mu)=\frac{a(\mu)-\mu}{\sigma}$ and $z_b(\mu)=\frac{b(\mu)-\mu}{\sigma}$, the derivative with respect to $\mu$ is
\begin{multline*}
        \frac{d}{d\mu}\int_{a(\mu)}^{b(\mu)}xf_X(x)dx= \\
        \left[\int_{z_a(\mu)}^{z_b(\mu)}\phi(z)dz+\mu\cdot \{\phi(z_b(\mu))\cdot {z'_b(\mu)} -\phi(z_a(\mu))\cdot z'_a(\mu)\} \right] \\
        +\sigma\left[ z_b(\mu) \phi(z_b(\mu))z'_b(\mu)-z_a(\mu)\phi(z_a(\mu))z'_a(\mu)\right] \\
        =\int_{z_a(\mu)}^{z_b(\mu)}\phi(z)dz+ b(\mu)\phi(z_b(\mu)) z'_b(\mu)-        a(\mu)\phi(z_a(\mu)) z'_a(\mu).
\end{multline*}
Using this result and letting $z_k(\mu)=\frac{k(\mu)-\mu}{\sigma}, z_{-k}(\mu)=\frac{-k(\mu)-\mu}{\sigma}$ and $z_0(\mu)=\frac{-\mu}{\sigma}$, we can express the derivative of  $\text{CVaR}_\epsilon^\mathbb{P}[Y]$ with respect to $\mu$ as follows.
\begin{align*}
        (1-&\epsilon)\frac{d}{d\mu}\text{CVaR}_\epsilon^\mathbb{P}[Y]\\
        &=\frac{d}{d\mu}\left[\int_{k(\mu)}^0xf_X(x)dx-\int_0^{-k(\mu)}xf_X(x)dx\right] \\
        &=\int^{z_0(\mu)}_{z_k(\mu)}\phi(z)dz - k(\mu)\cdot\phi(z_k(\mu))\cdot z'_k(\mu) \\
        &~~-\int^{z_{-k}(\mu)}_{z_0(\mu)}\phi(z)dz -\left[ (-k(\mu))\cdot\phi(z_{-k}(\mu))\cdot z'_{-k}(\mu)\right] \\
        &=\int^{z_0(\mu)}_{z_k(\mu)}\phi(z)dz-\int^{z_{-k}(\mu)}_{z_0(\mu)}\phi(z)dz \\
        &~~-k(\mu)\left[\phi(z_k(\mu))\cdot z'_k(\mu)-\phi(z_{-k}(\mu))\cdot z'_{-k}(\mu)\right]. 
    \end{align*}
Because 
\begin{align*}
    z'_k(\mu)=\frac{k'(\mu)-1}{\sigma}, && z'_{-k}(\mu)=\frac{-k'(\mu)-1}{\sigma},
\end{align*}
and by Lemma \ref{lemma:VaR} (b),
\begin{align*}
k'(\mu)=\frac{d k(\mu)}{d\mu}=\frac{\phi(z_k(\mu))-\phi(z_{-k}(\mu))}{\phi(z_k(\mu))+\phi(z_{-k}(\mu))}, 
\end{align*} we have $\phi(z_k(\mu))\cdot z'_k(\mu)-\phi(z_{-k}(\mu))\cdot z'_{-k}(\mu)=0$. Therefore,
\begin{equation*}
    \begin{aligned}
    &\frac{d}{d\mu}\text{CVaR}_\epsilon^\mathbb{P}[Y]=\frac{1}{1-\epsilon}\left[\int^{\frac{-\mu}{\sigma}}_{\frac{k(\mu)-\mu}{\sigma}}\phi(z)dz-\int^{\frac{-k(\mu)-\mu}{\sigma}}_{\frac{-\mu}{\sigma}}\phi(z)dz\right].
    \end{aligned}
\end{equation*}
Note that for any $b\leq0$, $\int_{a+b}^a \phi(z)dz> \int^{a-b}_a \phi(z)dz$ if $a> 0$ and $\int^a_{a+b} \phi(z)dz< \int^{a-b}_a \phi(z)dz$ if $a<0$.
Therefore,
\begin{equation*}
    \begin{aligned}
    &\frac{d}{d\mu}\text{CVaR}_{\epsilon}^\mathbb{P}[Y]<0~\text{for }\mu>0.\\
    \end{aligned}
\end{equation*}
and
\begin{equation*}
    \begin{aligned}
    &\frac{d}{d\mu}\text{CVaR}_{\epsilon}^\mathbb{P}[Y]>0~\text{for }\mu<0.\\
    \end{aligned}
\end{equation*}
That is, $\text{CVaR}_{\epsilon}^\mathbb{P}[Y]$ is monotonically decreasing with respect to $|\mu|$.

(c) Let $-|X|=Y$ and $\text{VaR}_\epsilon[Y]=k$. Then
\begin{equation*}
    \begin{aligned}
        & \text{CVaR}_\epsilon^\mathbb{P}[Y]=\mathbb{E}[Y:Y\geq k] \\
        & =\frac{1}{1-\epsilon}\int_{k}^0 yf_Y(y)dy \\
        & =\frac{1}{1-\epsilon}\int^0_{k} y\cdot\frac{\sqrt{2}}{\sqrt{\pi}\sigma}exp(-\frac{y^2}{2\sigma^2})dy.
    \end{aligned}
\end{equation*}
Let ${y}/{\sqrt{2}\sigma}=u, y=\sqrt{2}\sigma u, dy=\sqrt{2}\sigma du$. Then
\begin{equation*}
    \begin{aligned}
        & =\frac{1}{1-\epsilon}\int^0_{{k}/{\sqrt{2}\sigma}} \sqrt{2}\sigma u\cdot\frac{\sqrt{2}}{\sqrt{\pi}\sigma}exp(-u^2)\sqrt{2}\sigma du \\
        & =\frac{1}{1-\epsilon}\cdot \frac{2\sqrt{2}}{\sqrt{\pi}}\sigma\int^0_{{k}/{\sqrt{2}\sigma}}u\cdot exp(-u^2)du \\
        & =\frac{1}{1-\epsilon}\cdot \frac{2\sqrt{2}}{\sqrt{\pi}}\sigma\left[-\frac{1}{2}exp(-u^2)\right]^0_{{k}/{\sqrt{2}\sigma}} \\
        & =\frac{1}{1-\epsilon}\cdot \frac{\sqrt{2}}{\sqrt{\pi}}\sigma\left[exp(-\left[{k}/{\sqrt{2}\sigma}\right]^2)-1\right] \\
        & =\frac{1}{1-\epsilon}\cdot \frac{\sqrt{2}}{\sqrt{\pi}}\sigma\left[exp(-\left[\frac{\sqrt{2}\sigma\cdot erf^{-1}(\epsilon-1)}{\sqrt{2}\sigma}\right]^2)-1\right] \\
        & =\frac{1}{1-\epsilon}\cdot \frac{\sqrt{2}}{\sqrt{\pi}}\sigma\left[exp(-[erf^{-1}(\epsilon-1)]^2)-1\right] \\
        & =\kappa\cdot\sigma,
    \end{aligned}
\end{equation*}
where $k=\sqrt{2}\sigma\cdot erf^{-1}(\epsilon-1)$ by Lemma~\ref{lemma:VaR} (c) and $\kappa=\frac{1}{1-\epsilon}\sqrt{{2}/{\pi}}\left[exp(-[erf^{-1}(\epsilon-1)]^2)-1\right]$.

(d) If $\mu< 0$ and $\text{VaR}_{\epsilon}^\mathbb{P}[X]=k< 0$, then 
\begin{equation*}
    \begin{aligned}
        & \text{CVaR}_{\epsilon}^\mathbb{P}[X]=\mathbb{E}[X|X\geq k] \\
        &=\frac{1}{{1-\epsilon}}\int^{\infty}_{k}xf_X(x)dx \\
        &> \frac{1}{{1-\epsilon}}\int^{-k}_k xf_X(x)dx \\
        &=\frac{1}{{1-\epsilon}}[\int_{k}^0 xf_X(x)dx + \int_0^{-k} xf_X(x)dx] \\
        &=\frac{1}{{1-\epsilon}}[\int_{k}^0 xf_X(x)dx + \int_0^{k} xf_X(-x)dx] \\
        &=\frac{1}{{1-\epsilon}}[\int_{k}^0 xf_X(x)dt - \int_k^0 xf_X(-x)dx] \\
        &\geq\frac{1}{{1-\epsilon}}[\int_{k}^0 x f_X(x)dt + \int_k^0 xf_X(-x)dx] \\
        &=\frac{1}{{1-\epsilon}}\int_{k}^0 x(f_X(-x)+f_X(x))dx \\
        &=\frac{1}{{1-\epsilon}}\int_{k}^0 xf_{-|X|}(x)dx \\
        &\geq\frac{1}{1-\epsilon}\int^{0}_{\text{VaR}_{\epsilon}^\mathbb{P}[-|X|]} xf_{-|X|}(x)dx \\
        &=\mathbb{E}[-|X|:-|X|\geq \text{VaR}_{\epsilon}^\mathbb{P}[-|X|]] \\
        &=\text{CVaR}_{\epsilon}^\mathbb{P}[-|X|]. \\
    \end{aligned}
\end{equation*}
The last inequality is because $k > \text{VaR}_\epsilon^{\mathbb{P}}[-|X|]$ by Lemma~\ref{lemma:VaR} (d).
Therefore, $\text{CVaR}_\epsilon^\mathbb{P}[X]=\mu+\delta\cdot\sigma>\text{CVaR}_\epsilon^\mathbb{P}[-|X|].$

Similarly, if $\mu> 0$ and $\text{VaR}_\epsilon^\mathbb{P}[X]> 0$, the mean of $-X$ is smaller than zero and $\text{VaR}_\epsilon^\mathbb{P}[-X]< 0$. Thus, we can use the inequality that we derived in the preceding paragraph, that is,
\begin{equation*}
    \begin{aligned}
        &\text{CVaR}_\epsilon^\mathbb{P}[-X] >\text{CVaR}_\epsilon^\mathbb{P}[-|X|].
        \end{aligned}
        \end{equation*}
Because $X$ has a symmetric distribution about the mean $\mu$, we have $-\text{VaR}_\epsilon^\mathbb{P}[-X]=\text{VaR}_{1-\epsilon}^\mathbb{P}[X]$. Also, $\text{CVaR}_{1-\epsilon}^\mathbb{P}[X]=\mathbb{E}[X:X\geq\text{VaR}_\epsilon^\mathbb{P}[X]]=\mu-\delta\cdot\sigma$ by \cite{norton_calculating_2021}. Thus, 
  \begin{equation*}
    \begin{aligned}      
        \text{CVaR}_\epsilon^\mathbb{P}[-X]&=\mathbb{E}[-X:-X\geq\text{VaR}_\epsilon^\mathbb{P}[-X]] \\
        &=-\mathbb{E}[X:X\leq-\text{VaR}_\epsilon^\mathbb{P}[-X]]\\
        &=-\mathbb{E}[X:X\leq\text{VaR}_{1-\epsilon}^\mathbb{P}[X]]\\
        &=-(\mu-\delta\cdot\sigma).
    \end{aligned}
\end{equation*}Therefore, if $\text{VaR}_\epsilon^\mathbb{P}[X]>0\land\mu>0$ or $\text{VaR}_\epsilon^\mathbb{P}[X]<0\land\mu< 0$, then $-|\mu|+\delta\cdot\sigma> \text{CVaR}_\epsilon^\mathbb{P}[-|X|]$.
\end{proof}

\fi

\end{document}